    \newlength{\defbaselineskip}
\def\thm@space@setup{  \thm@preskip=8pt plus 1pt minus 2pt
  \thm@postskip=\thm@preskip
}
\theoremstyle{definition}
\newtheorem{definition}{Definition}
\theoremstyle{plain}
\newtheorem{lemma}{Lemma}
\newtheorem{theorem}{Theorem}
\newtheorem{proposition}{Proposition}
\newcommand{\R}{\mathbb{R}}
\newcommand{\Exv}[2][]{\mathbf{E}_{#1}\left[ #2 \right]}
\newcommand{\Var}[2][]{\mathbf{Var}_{#1}\left( #2 \right)}
\newcommand{\Cov}[2][]{\mathbf{Cov}_{#1}\left( #2 \right)}
\newcommand{\Diag}[1]{\mathbf{diag}\left( #1 \right)}
\newcommand{\norm}[1]{\left\| #1 \right\|}
\newcommand{\indic}[1]{\mathbf{1}\left\{#1\right\}} \DeclareMathOperator{\sech}{sech}
\DeclareMathOperator*{\argmin}{arg\,min}
\DeclareMathOperator*{\argmax}{arg\,max}
\newcommand{\defeq}{:=}
\newcommand{\capspace}{\vspace{-0.05em}}
    \author[1]{Tri Dao}
    \author[1]{Albert Gu}
    \author[1]{Alexander J. Ratner}
    \author[2]{Virginia Smith}
    \author[3]{Christopher De Sa}
    \author[1]{Christopher R{\'e}}
    \affil[1]{Department of Computer Science, Stanford University}
    \affil[2]{Department of Electrical and Computer Engineering, Carnegie Mellon University}
    \affil[3]{Department of Computer Science, Cornell University}
    \affil[ ]{\texttt{\{trid,albertgu,ajratner\}@stanford.edu, smithv@cmu.edu, cdesa@cs.cornell.edu, chrismre@cs.stanford.edu}}
      \icmltitlerunning{A Kernel Theory of Modern Data Augmentation}
\date{\today}
\title{A Kernel Theory of Modern Data Augmentation}
\begin{document}

\iftoggle{arxiv}{
  \maketitle
}{
  \twocolumn[
  \icmltitle{A Kernel Theory of Modern Data Augmentation}

        \icmlsetsymbol{equal}{*}

  \begin{icmlauthorlist}
  \icmlauthor{Tri Dao}{Stanford}
  \icmlauthor{Albert Gu}{Stanford}
  \icmlauthor{Alexander J. Ratner}{Stanford}
  \icmlauthor{Virginia Smith}{CMU}
  \icmlauthor{Christopher De Sa}{Cornell}
  \icmlauthor{Christopher R{\'e}}{Stanford}
  \end{icmlauthorlist}

  \icmlaffiliation{Stanford}{Department of Computer Science, Stanford University, California, USA}
  \icmlaffiliation{CMU}{Department of Electrical and Computer Engineering, Carnegie Mellon University, Pennsylvania, USA}
  \icmlaffiliation{Cornell}{Department of Computer Science, Cornell University, New York, USA}

  \icmlcorrespondingauthor{Tri Dao}{trid@cs.stanford.edu}

        \icmlkeywords{Data Augmentation, Invariant Kernels, Invariant Features, Variance
    Regularization, Tangent Propagation}

  \vskip 0.3in
  ]

  \printAffiliationsAndNotice{}    
}

\begin{abstract}
Data augmentation, a technique in which a training set is expanded with class-preserving transformations, is ubiquitous in modern machine learning pipelines.
In this paper, we seek to establish a theoretical framework for understanding data augmentation. We approach this from two directions: First, we provide a general model of augmentation as a Markov process, and show that kernels appear naturally with respect to this model, even when we do not employ kernel classification.
Next, we analyze more directly the effect of augmentation on kernel classifiers, showing that data augmentation can be approximated by first-order feature averaging and second-order variance regularization components.
These frameworks both serve to illustrate the ways in which data augmentation affects the downstream learning model, and the resulting analyses provide novel connections between prior work in invariant kernels, tangent propagation, and robust optimization.
Finally, we provide several proof-of-concept applications showing that our
theory can be useful for accelerating machine learning workflows, such as
reducing the amount of computation needed to train using augmented data, and
predicting the utility of a transformation prior to training.

\end{abstract}

\section{Introduction}
\label{sec:introduction}

The process of augmenting a training dataset with synthetic examples has become a critical step in modern machine learning pipelines.
The aim of data augmentation is to artificially create new training data by applying
transformations,
such as rotations or crops for images,
to input data while preserving the class labels.
This practice has many potential benefits: Data augmentation can encode prior knowledge about data or task-specific invariances, act as regularizer to make the resulting model more robust, and provide resources to data-hungry deep learning models. As a testament to its growing importance, the technique has been used to achieve nearly all state-of-the-art results in image recognition~\citep{cirecsan2010deep, dosovitskiy2016discriminative, graham2014fractional, sajjadi2016regularization}, and is becoming a staple in many other areas as well~\citep{uhlich2017improving, lu2006enhancing}.
Learning augmentation policies alone can also boost the state-of-the-art performance in image classification tasks~\citep{ratner2017learning,cubuk2018autoaugment}.

Despite its ubiquity and importance to the learning process, data
augmentation is typically performed in an ad-hoc manner with little understanding of the underlying theoretical principles.
In the field of deep learning, for example, data augmentation is commonly understood to act as a regularizer by increasing the number of data points and constraining the model
\citep{Goodfellow-et-al-2016, zhang2016understanding}.
However, even for simpler models, it is not well-understood how training on augmented data affects the learning process,
the parameters, and the decision surface of the resulting model. This is exacerbated by the fact that data augmentation is performed in diverse ways in modern machine learning pipelines, for different tasks and domains, thus precluding a general model of transformation.
Our results show that regularization is only part of the story.

In this paper, we aim to develop a theoretical understanding of data augmentation.
First, in Section~\ref{sec:knn}, we analyze data augmentation as a Markov process, in which augmentation is performed via a random sequence of transformations. This formulation closely matches how augmentation is often applied in practice.
Surprisingly, we show that performing $k$-nearest neighbors with this model asymptotically results in a kernel classifier, where the kernel is a function of the base augmentations.
These results demonstrate that kernels appear naturally with respect to data augmentation, regardless of the base model, and illustrate the effect of augmentation on the learned representation of the original data.

Motivated by the connection between data augmentation and kernels, in Section~\ref{sec:kernels} we show that
a kernel classifier on augmented data approximately decomposes into two
components:
\begin{enumerate*}[label=(\roman*)]
  \item an averaged version of the transformed features, and
  \item a data-dependent variance regularization term.
\end{enumerate*}
This suggests a more nuanced explanation of data augmentation---namely, that it improves generalization \emph{both by inducing invariance and by reducing model complexity}. We validate the quality of our approximation empirically,
and draw connections to other generalization-improving techniques, including recent work in invariant learning~\citep{zhao2017marginalized, mroueh2015learning, raj2016local} and robust optimization~\citep{hongseok2017variance}.

Finally, in Section~\ref{sec:deep_learning}, to illustrate the utility of our
theoretical understanding of augmentation, we explore promising practical
applications, including: (i) developing a diagnostic to determine, prior to
training, the importance of an augmentation; (ii) reducing training costs for
kernel methods by allowing for augmentations to be applied directly to
features---rather than the raw data---via a random Fourier features
approach; and (iii) suggesting a heuristic for training neural networks
to reduce computation while realizing most of the accuracy gain from
augmentation.

\section{Related Work}
\label{sec:related_work}

Data augmentation has long played an important role in machine learning. For many years it has been used, for example, in the form of \emph{jittering} and \emph{virtual examples} in the neural network and kernel methods literatures~\citep{sietsma1991creating,scholkopf1996incorporating,decoste2002training}. These methods aim to augment or modify the raw training data so that the learned model will be invariant to known transformations or perturbations. There has also been significant work in incorporating invariance directly into the model or training procedure, rather than by expanding the training set. One illustrative example is that of tangent propagation for neural networks~\citep{simard1992tangent, simard1998transformation}, which proposes a regularization penalty to enforce local invariance, and has been extended in several recent works~\citep{rifai2011manifold,demyanov2015invariant,zhao2017marginalized}. However, while efforts have been made that loosely connect traditional data augmentation with these methods~\citep{leen1995data,zhao2017marginalized}, there has not been a rigorous study on how these sets of procedures relate in the context of modern models and transformations.

In this work, we make explicit the connection between augmentation and modifications to the model, and show that prior work on tangent propagation can be derived as a special case of our more general theoretical framework (Section~\ref{sec:deep_learning}). Moreover, we draw connections to recent work on invariant learning~\citep{mroueh2015learning,raj2016local} and robust optimization~\citep{hongseok2017variance}, illustrating that data augmentation not only affects the model by increasing invariance to specific transformations, but also by reducing the variance of the estimator. These analyses lead to an important insight into how invariance can be most effectively applied for kernel methods and deep learning architectures (Section~\ref{sec:deep_learning}), which we show can be used to reduce training computation and diagnose the effectiveness of various transformations.

Prior theory also does not capture the complex process by which data augmentation is often applied. For example, previous work~\citep{bishop1995training,chapelle2001vicinal} shows
that adding noise to input data has the effect of regularizing the model,
but these effects have yet to be explored for more commonly applied complex
transformations, and it is not well-understood how the inductive bias embedded
in complex transformations manifest themselves in the invariance of the model
(addressed here in Section~\ref{sec:kernels}).
A common recipe in achieving state-of-the-art accuracy in image classification is to apply a sequence of more complex transformations such as crops, flips, or local affine transformations to the training data, with parameters drawn randomly from hand-tuned ranges~\citep{cirecsan2010deep,dosovitskiy2014discriminative}. Similar strategies have also been employed in applications of classification for audio~\citep{uhlich2017improving} and text~\citep{lu2006enhancing}.
In Section~\ref{sec:knn}, we analyze a motivating model reaffirming the
connection between augmentation and kernel methods, even in the setting of
complex and composed transformations.

Finally, while data augmentation has been well-studied in the kernels literature~\citep[][]{burges1999geometry, scholkopf1996incorporating, muandet2012learning}, it is typically explored in the context of simple geometrical invariances with closed forms.
For example, \citet{vanderwilk2018learning} use Gaussian processes to learn
these invariances from data by maximizing the marginal likelihood.
Further, the connection is often approached in the opposite direction---by looking for kernels that satisfy certain invariance properties~\citep{haasdonk2007invariant, teo2008convex}.
We instead approach the connection directly via data augmentation, and show that even complicated augmentation procedures akin to those used in practice can be represented as a kernel method.

\section{Data Augmentation as a Kernel}\label{sec:knn}

To begin our study of data augmentation,  we propose and investigate a model of augmentation as a Markov process, inspired by the general manner in which the process is applied---via the composition of multiple different types of transformations.
Surprisingly, we show that this augmentation model combined with a $k$-nearest neighbor ($k$-NN) classifier is asymptotically equivalent to a kernel classifier, where the kernel is a function of the base transformations.
While the technical details of the section can be skipped on a first reading, the central message is that kernels appear naturally in relation to data augmentation, even when we do not start with a kernel classifier.
This provides additional motivation to study kernel classifiers trained on augmented data,
as in Section~\ref{sec:kernels}.

\textbf{Markov Chain Augmentation Process.} In data augmentation, the aim is to perform class-preserving transformations to the original training data to improve generalization.
As a concrete example, a classifier that correctly predicts an image of the number `1' should be able to predict this number whether or not the image is slightly rotated, translated, or blurred.
It is therefore common to pick some number of augmentations (e.g., for images: rotation, zoom, blur, flip, etc.), and to create synthetic examples by taking an original data point and applying a sequence of these augmentations.   To model this process, we consider the following procedure: given a data point, we  pick augmentations from a set at random, applying them one after the other.
To avoid deviating too far, with some probability we discard the point and start over from a random point in the original dataset. We formalize this below.

\begin{definition}[Markov chain augmentation model]\label{def:markov} 
Given a dataset of $n$ examples $z_i = (x_i, y_i) \in \mathcal{X} \times \mathcal{Y}$, we \textit{augment} the dataset via \emph{augmentation matrices} $A_1, A_2, \ldots, A_m$, for $A_j \in \R^{\Omega \times \Omega}$, which are stochastic transition matrices over a finite state space of possible labeled (augmented) examples $\Omega \defeq \mathcal{X} \times \mathcal{Y}$.
We model this via a discrete time Markov chain with the transitions:
\end{definition}
\begin{itemize}[leftmargin=*,itemsep=-2mm]
  \item With probability proportional to $\beta_j$, a \emph{transition} occurs via augmentation matrix $A_j$.
  \item With probability proportional to $\gamma_i$, a \emph{retraction} to the training set occurs, and the state resets to $z_i$.
\end{itemize}

For example, the probability of retracting to training example $z_1$ is $\gamma_1/(\gamma_1 + \dots + \gamma_n + \beta_1 + \dots + \beta_m)$.
The augmentation process starts from any point and follows Definition~\ref{def:markov} for an arbitrary amount of time.
The retraction steps intuitively keep the final distribution grounded closer to the original training points.

From Definition~\ref{def:markov}, by conditioning on which transition is chosen,
it is evident that the entire process is equivalent to a Markov chain whose transition matrix is the weighted average of the base transitions. 
Note that the transition matrices $A_j$ do not need to be materialized but are implicit from the description of the augmentation. A concrete example is given in Section~\ref{subsec:noise-kernel}.
Without loss of generality, we let all rates $\beta_j,\gamma_i$ be normalized with $\sum_j \gamma_i = 1$.
Let $\{ e_{\omega} \}_{\omega \in \Omega}$ be the standard basis of $\Omega$, and let $e_{z_i}$ be the basis element corresponding to $z_i$. The resulting transition matrix and stationary distribution are given below; proofs and additional details are provided in Appendix~\ref{sec:knn-proofs}.  This describes the long-run distribution of the augmented dataset.
\begin{proposition}
\label{prop:transition}
  The described augmentation process is a Markov chain with transition matrix:
  \[
  \textstyle
    R = \left(1+\sum_{j=1}^m \beta_j\right)^{-1} \left[\sum_{i=1}^m \beta_j A_j + \sum_{i=1}^n \gamma_i \left(\mathbf{1} e_{z_i}^\top\right)\right] \, .
  \]
  
  \end{proposition}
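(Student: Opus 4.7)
The proposition reduces to a bookkeeping exercise: we need to identify the transition distribution of the composite process as a convex combination of the constituent transition distributions, and then verify that ``retraction to $z_j$'' corresponds to the rank-one matrix $\mathbf{1} e_{z_j}^\top$. The plan is to formalize the ``with probability proportional to'' wording and then compute.

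First I would establish the normalization. Since each time step picks exactly one of the $m+D$ alternatives (augmentation $i$ or retraction $j$) with weights $\beta_i$ and $\gamma_j$, the total weight is $\sum_{i=1}^m \beta_i + \sum_{j=1}^D \gamma_j = 1 + \sum_{i=1}^m \beta_i$, using the normalization $\sum_j \gamma_j = 1$. Hence the probability of choosing alternative $i$ (resp.\ $j$) is $\beta_i / (1+\sum_k \beta_k)$ (resp.\ $\gamma_j / (1+\sum_k \beta_k)$).

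Next I would condition on which alternative is chosen. Given the current state $z$, if alternative $i$ is chosen the conditional distribution of the next state is the $z$-th row of $A_i$; if alternative $j$ is chosen the conditional distribution is the point mass $\delta_{z_j}$, whose row representation is $e_{z_j}^\top$. The unconditional row for state $z$ is therefore
\[
R_{z,\cdot} \;=\; \frac{1}{1+\sum_k \beta_k}\!\left[\sum_{i=1}^m \beta_i (A_i)_{z,\cdot} + \sum_{j=1}^D \gamma_j\, e_{z_j}^\top\right].
\]
Stacking these rows over $z \in \Omega$, the second sum becomes $\sum_j \gamma_j \mathbf{1} e_{z_j}^\top$ since the retraction row $e_{z_j}^\top$ is the same for every starting state $z$. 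This yields precisely the claimed formula for $R$, and the Markov property is inherited from the fact that the chosen alternative and its resulting transition depend only on the current state.

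The only slightly subtle point is verifying that $R$ is indeed a well-defined stochastic matrix and that the process is ergodic (the latter is asserted in the surrounding text but worth checking). Each $A_i$ is stochastic and $\mathbf{1} e_{z_j}^\top$ is stochastic, so their convex combination with nonnegative weights summing to $1$ is stochastic, i.e.\ $R\mathbf{1} = \mathbf{1}$. For ergodicity I would argue: (i) irreducibility follows from the surjectivity assumption combined with the fact that every row of $R$ has positive mass $\gamma_j/(1+\sum_k \beta_k)$ at each training point $z_j$, so from any state we can retract to any $z_j$ in one step and then reach any target state via the augmentations guaranteed by surjectivity; (ii) aperiodicity follows because, for any $z_j$ with $\gamma_j > 0$, the diagonal entry $R_{z_j,z_j} \ge \gamma_j/(1+\sum_k \beta_k) > 0$, giving a self-loop. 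I do not expect any real obstacle here; the ``hardest'' part is just keeping the row/column indexing consistent so that the retraction matrix is written as $\mathbf{1} e_{z_j}^\top$ rather than its transpose.
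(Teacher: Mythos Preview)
Your proposal is correct and follows exactly the approach the paper takes: the paper simply notes that ``by conditioning on which transition is chosen, it is evident that the entire process is equivalent to a Markov chain whose transition matrix is the weighted average of the base transitions,'' and then argues ergodicity from surjectivity plus the ability to retract to any training example. You have filled in the details the paper leaves implicit, including the stochasticity check and the self-loop argument for aperiodicity, but the underlying argument is identical.
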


\begin{lemma}[Stationary distribution]  \label{lmm:stationary}
        The stationary distribution is given by:
\iftoggle{arxiv}{
  \begin{equation}
    \label{eq:stationary}
    \textstyle
    \pi = \rho^\top\left( I(\beta+1)-A \right)^{-1}, \, \text{where} \, \, \,
    A = \sum_{j=1}^m \beta_j A_j, \,\, \beta = \sum_{i=1}^m \beta_j, \, \,
    \rho = \sum_{i=1}^n \gamma_i e_{z_i} \, .
  \end{equation}
}{
  \begin{equation}
    \label{eq:stationary}
    \textstyle
    \pi = \rho^\top\left( I(\beta+1)-A \right)^{-1},
  \end{equation}
  where
  \begin{align*}
    \textstyle
    A = \sum_{j=1}^m \beta_j A_j, \quad \beta = \sum_{j=1}^m \beta_j, \quad
    \rho = \sum_{i=1}^n \gamma_i e_{z_i} \, .
  \end{align*}
}
\end{lemma}

Lemma~\ref{lmm:stationary} agrees intuitively with the augmentation process:
When all $\beta_j \approx 0$ (i.e., low rate of augmentation), Lemma~\ref{lmm:stationary} implies that the stationary distribution $\pi$ is close to $\rho$, the original data distribution.
As $\beta_j$ increases, the stationary distribution becomes increasingly distorted by the augmentations.

\textbf{Classification Yields a Kernel.} Using our proposed model of augmentation, we can show that classifying an unseen example using augmented data results in a kernel classifier. In doing so, we can observe the effect that augmentation has on the learned feature representation of the original data. We discuss several additional uses and extensions of the result itself in Appendix~\ref{sec:discussion}.

\begin{theorem}  \label{thm:knn}
  Consider running the Markov chain augmentation process in Definition~\ref{def:markov} and classifying an unseen example $x\in\mathcal{X}$ using an asymptotically Bayes-optimal classifier, such as $k$-nearest neighbors.
  Suppose that the $A_i$ are time-reversible with equal stationary distributions.
  Then in the limit as time $T\to\infty$ and $k\to\infty$,
  this classification has the following form:
  \begin{equation}
  \setlength{\abovedisplayskip}{5pt}
\setlength{\belowdisplayskip}{5pt}
    \label{eq:kernel-classifier}
    \textstyle
    \hat y = \operatorname*{sign} \sum_{i=1}^n y_i\alpha_{z_i} K_{x_i, x} \, ,
  \end{equation}
  where $\alpha \in \R^\Omega$ is supported only on the dataset $z_1, \dots, z_n$, and $K \in \R^{\Omega \times \Omega}$ is a kernel matrix (i.e., $K$ is symmetric positive definite and non-negative) depending only on all the augmentations $A_j, \beta_j$.
   \end{theorem}

Theorem~\ref{thm:knn} follows from formulating the stationary distribution (Lemma~\ref{eq:stationary}) as $\pi=\alpha^{\top}K$ for a kernel matrix $K$ and $\alpha\in\R^{\Omega}$.
Noting that $k$-NN asymptotically acts as a Bayes classifier, selecting the most probable label according to this stationary distribution, leads to~\eqref{eq:kernel-classifier}.\footnote{We use $k$-NN as a simple example of a nonparametric classifier, but the result holds for any asymptotically Bayes classifier.}
In Appendix~\ref{sec:knn-proofs}, we include a closed form for $\alpha$ and $K$ along with the proof. We include details and examples, and elaborate on the strength of the assumptions.

\textbf{Takeaways.} This result has two important implications: First, kernels appear naturally in relation to complex forms of augmentation, even when we do not begin with a kernel classifier.
This underscores the connection between augmentation and kernels even with complicated compositional models, and also serves as motivation for our focused study on kernel classifiers in Section~\ref{sec:kernels}. 
Second, and more generally, data augmentation---a process that produces synthetic training examples from the raw data---can be understood more directly based on its effect on downstream components in the learning process, such as the features of the original data and the resulting learned model.
We make this link more explicit in Section~\ref{sec:kernels}, and show how to exploit it in practice in Section~\ref{sec:deep_learning}.

\section{Effects of Augmentation: Invariance and Regularization}
\label{sec:kernels}

In this section we build on the connection between kernels and augmentation
in Section~\ref{sec:knn}, exploring directly the effect of augmentation on a
kernel classifier.
It is commonly understood that data augmentation can be seen as a regularizer, in that it reduces generalization error but not necessarily training error~\citep{Goodfellow-et-al-2016, zhang2016understanding}.
We make this more precise, showing that data augmentation has two specific effects:
\begin{enumerate*}[label=(\roman*)]
  \item increasing the invariance by averaging the features of augmented data points, and
  \item penalizing model complexity via a regularization term based on the variance of the augmented forms.
\end{enumerate*}
These are two approaches that have been explicitly applied to get more robust
performance in machine learning, though outside of the context of data
augmentation.
We demonstrate connections to
prior work in our derivation of the feature averaging
(Section~\ref{subsec:feature_averaging}) and variance regularization
(Section~\ref{subsec:variance_regularization}) terms.
We also validate our theory empirically (Section~\ref{sec:validation}), and in
Section~\ref{sec:deep_learning}, show the practical utility of our analysis
to both kernel and deep learning pipelines.

\textbf{General Augmentation Process.} To illustrate the effects of augmentation, we explore it in conjunction with a general kernel classifier. In particular, suppose that we have an original kernel $K$ with a
finite-dimensional\footnote{We focus on finite-dimensional feature maps for ease
  of exposition, but the analysis still holds for infinite-dimensional feature maps.} feature map $\phi: \R^d \rightarrow \R^D$, and we aim to minimize some smooth convex loss $l: \R \times \R \rightarrow \R$ with parameter $w
\in \mathbb{R}^D$ over a dataset $(x_1, y_1), \dots, (x_n, y_n)$.
The original objective function to minimize is $f(w) = \frac{1}{n} \sum_{i=1}^n l\left( w^\top \phi(x_i); y_i \right)$, with two common losses being logistic $l(\hat{y}; y)=\log(1 + \exp(-y\hat{y}))$ and quadratic $l(\hat{y}; y)=(\hat{y} - y)^2$.

Now, suppose that we first augment the dataset using an augmentation kernel $T$. Whereas the augmentation kernel in Section~\ref{sec:knn} had a specific form based on the stationary distribution of the proposed Markov process, here we make this more general, simply requiring that for each data point $x_i$,  $T(x_i)$ describes the distribution over data points into which $x_i$ can be transformed.
The new objective function becomes:
\begin{equation}
  \label{eq:augmented_objective}
  \textstyle g(w) = \frac{1}{n} \sum_{i=1}^n \Exv[t_i \sim T(x_i)]{l\left( w^\top \phi(t_i); y_i \right)}.
\end{equation}

\subsection{Data Augmentation as Feature Averaging}
\label{subsec:feature_averaging}

\iftoggle{arxiv}{}
{
\captionsetup[sub]{font={8pt,sf}}
\begin{figure*}[t!]
  \centering
     \begin{subfigure}{0.33\linewidth}
    \centering
    \includegraphics[width=\textwidth,trim={0 0 0 0},clip]{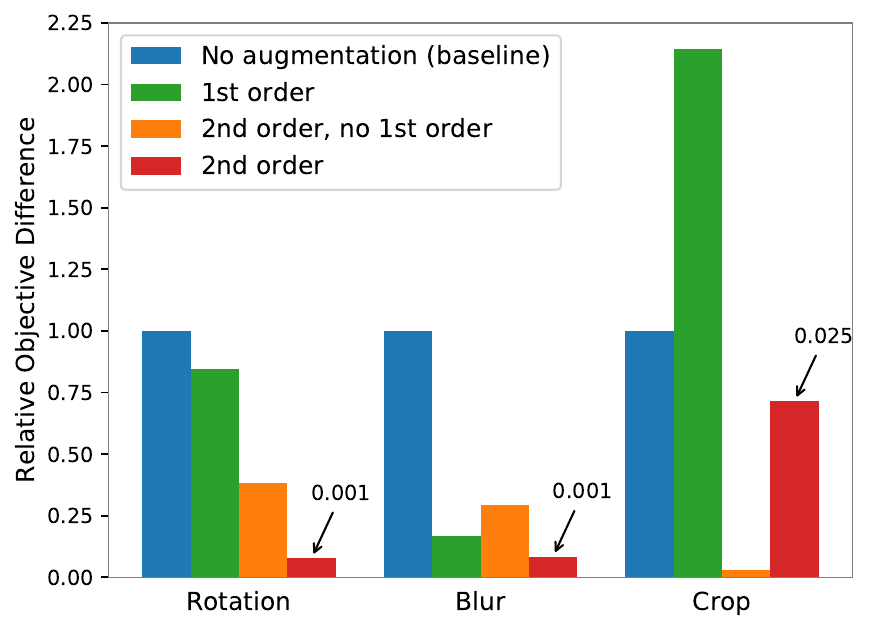}
   \capspace
    \caption{Absolute Objective Difference}
    \label{fig:newobj}
  \end{subfigure}\hfill    \begin{subfigure}{0.33\linewidth}
    \centering     \includegraphics[width=\textwidth,trim={0 0 0 0},clip]{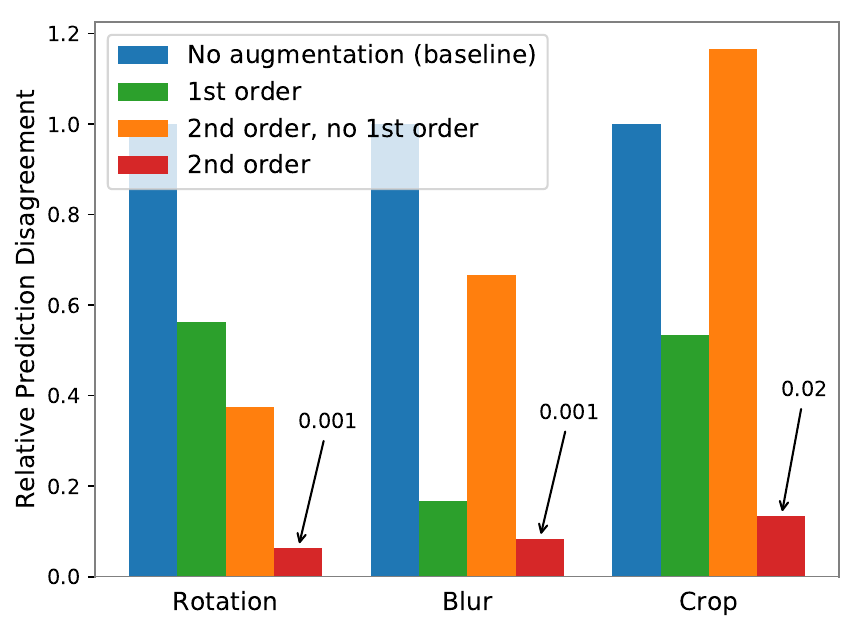}
    \capspace
    \caption{Prediction Disagreement}
    \label{fig:newpreddisagreement}
  \end{subfigure}\hfill   \begin{subfigure}{0.33\linewidth}
    \centering
        \includegraphics[width=\textwidth]{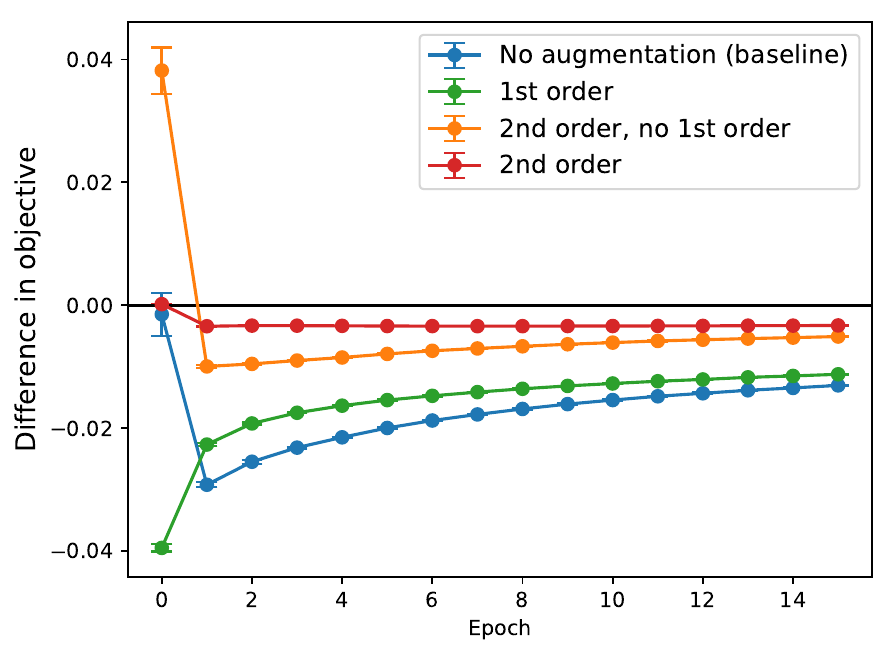}
    \capspace
    \caption{Example Trace: Rotation}
    \label{fig:newtrace}
  \end{subfigure}\hfill  \caption{For the MNIST dataset, we validate that (a) the proposed approximate objectives $\hat{g}(w)$ and $\tilde{g}(w)$ are close to the true objective $g(w)$, and (b) training on the approximate objectives leads to similar predictions as training on the true objective. We plot the relative difference between the proposed approximations and the true augmented objective, in terms of difference in objective value~\eqref{fig:newobj} and resulting test prediction disagreement~\eqref{fig:newpreddisagreement}, using the non-augmented objective as a baseline. The 2nd-order approximation closely matches the true objective, particularly in terms of the resulting predictions. We observe that the accuracy of the approximations remains stable throughout training~\eqref{fig:newtrace}. Full experiments are provided in Appendix~\ref{sec:extraexps}.}
  \label{fig:newvalid}
\end{figure*}
}

We begin by showing that, to first order, objective~\eqref{eq:augmented_objective} can be approximated by a term that computes the average augmented feature of each data point. In particular, suppose that the applied augmentations are ``local'' in the sense that they
do not significantly modify the feature map $\phi$.
Using the first-order Taylor approximation, we can expand each term around any
point $\phi_0$ that does not depend on $t_i$:
\iftoggle{arxiv}{
\begin{dmath*}
  \Exv[t_i \sim T(x_i)]{l\left( w^\top \phi(t_i); y_i \right)} \approx
  l \left( w^\top \phi_0; y_i
  \right) + \Exv[t_i \sim T(x_i)]{w^\top(\phi_0 - \phi(t_i))} l'(w^\top \phi_0; y_i) \, .
\end{dmath*}
}{
\begin{multline*}
  \Exv[t_i \sim T(x_i)]{l\left( w^\top \phi(t_i); y_i \right)} \approx  \\
  l \left( w^\top \phi_0; y_i
  \right) + \Exv[t_i \sim T(x_i)]{w^\top(\phi_0 - \phi(t_i))} l'(w^\top \phi_0; y_i) \, .
\end{multline*}
}
Picking $\phi_0 = \Exv[t_i \sim T(x_i)]{\phi(t_i)}$, the second term vanishes, yielding the \emph{first-order approximation}:
\begin{equation}
  \label{eq:feature_avg_objective}
 \textstyle  g(w)
  \approx
  \hat g(w)
  \defeq
  \frac{1}{n} \sum_{i=1}^n l\left( w^\top \Exv[t_i \sim T(x_i)]{\phi(t_i)}; y_i \right).
\end{equation}
This is exactly the objective of a linear model with a new feature map $\psi(x) = \Exv[t \sim T(x)]{\phi(t)}$, i.e., the average feature of
all the transformed versions of $x$.
If we overload notation and use $T(x, u)$ to denote the probability density of
transforming $x$ to $u$, this feature map corresponds to a new kernel:
\begin{align*}
  &\bar K(x, x') \\
  =&\
  \langle \psi(x), \psi(x') \rangle
  =
  \langle \Exv[u \sim T(x)]{\phi(u)}, \Exv[u' \sim T(x')]{\phi(u')} \rangle \\
  =&\
  \int_{u \in \R^n} \int_{u' \in \R^n} \langle \phi(u), \phi(u') \rangle T(x, u) T(x', u') \, du' \, du \\
  =&\
  \int_{u \in \R^n} \int_{u' \in \R^n} K(u, u') T(x, u) T(x', u') \, du' \, du \\
  =&\
  (T K T^\top)(x, x') \, .
\end{align*}
That is, training a kernel linear classifier with a particular loss function
plus data augmentation is equivalent, to first order, to training a linear
classifier with the same loss on an \textit{augmented kernel} $\bar K = TKT^\top$,
with feature map $\psi(x) = \Exv[t \sim T(x)]{\phi(t)}$.
This feature map is exactly the embedding of the distribution of transformed
points around $x$ into the reproducing kernel Hilbert
space~\citep{muandet2017kernel, raj2016local}.
This means that the first-order effect of training on augmented data is
equivalent to training a support measure machine~\cite{muandet2012learning},
with the $n$ input distributions corresponding to the $n$ distributions of
transformed points around $x_1, \dots, x_n$.
The new kernel $\bar K$ has the effect of increasing the invariance of the
model, as averaging the features from transformed inputs that are not
necessarily present in the original dataset makes the features less variable to
transformation.

By Jensen's inequality, since the function $l$ is convex, $\hat g(w) \le g(w)$.
In other words, if we solve the optimization problem that results from data augmentation, the
resulting objective value using $\bar K$ will be no larger.
Further, if we assume that the loss function is strongly convex and strongly
smooth, we can quantify how much the solution to the first-order approximation
and the solution of the original problem with augmented data will differ (see
Proposition~\ref{thm:first_order_approx} in the appendix).
We validate the accuracy of this first-order approximation empirically in Section~\ref{sec:validation}.

\subsection{Data Augmentation as Variance Regularization}
\label{subsec:variance_regularization}

Next, we show that the second-order approximation of the objective on
an augmented dataset is equivalent to variance regularization, making the
classifier more robust. We can get an exact expression for the error by considering the second-order
term in the Taylor expansion, with $\zeta_i$ denoting the remainder function from
Taylor's theorem:

\iftoggle{arxiv}{
\begin{align*}
\textstyle
  g(w) - \hat g(w)
  &=
\textstyle  \frac{1}{2n} \sum_{i=1}^n \Exv[t_i \sim T(x_i)]{\left( w^\top (\phi(t_i) - \psi(x_i)) \right)^2 l''(\zeta_i(w^\top \phi(t_i)); y_i)} \\
  & =
 \textstyle w^\top \left(
    \frac{1}{2n} \sum_{i=1}^n \Exv[t_i \sim T(x_i)]{\Delta_{t_i,x_i}\Delta_{t_i,x_i}^\top l''(\zeta_i(w^\top \phi(t_i)); y_i)}
  \right) w,
\end{align*}
}{
{\small
\begin{dmath*}
  g(w) - \hat g(w)
  =
  \frac{1}{2n} \sum_{i=1}^n \Exv[t_i \sim T(x_i)]{\left( w^\top (\phi(t_i) - \psi(x_i)) \right)^2 l''(\zeta_i(w^\top \phi(t_i)); y_i)}
  =
  w^\top \left(
    \frac{1}{2n} \sum_{i=1}^n \Exv[t_i \sim T(x_i)]{\Delta_{t_i,x_i}\Delta_{t_i,x_i}^\top l''(\zeta_i(w^\top \phi(t_i)); y_i)}
  \right) w,
\end{dmath*}
}
}
where $\Delta_{t_i,x_i} \defeq \phi(t_i) - \psi(x_i)$ is the difference between the features
of the transformed image $t_i$ and the averaged features $\psi(x_i)$.
If (as is the case for logistic and linear regression) $l''$ is independent of
$y$, the error term is independent of the labels.
That is, the original augmented objective $g$ is the modified objective $\hat g$
plus some regularization that is a function of the training examples, but
not the labels.
In other words, data augmentation has the effect of performing
\emph{data-dependent regularization}.

The second-order approximation to the objective is:
\iftoggle{arxiv}{
\begin{equation}
 \textstyle \tilde{g}(w) \defeq \hat g(w) +
  \frac{1}{2n} \sum_{i=1}^n w^\top \Exv[t_i \sim T(x_i)]{\Delta_{t_i, x_i} \Delta_{t_i, x_i}^\top} l''(w^\top \psi(x_i)) w \, .
  \label{eqn:second-order-approx}
\end{equation}
}{
\begin{align}
  \tilde{g}(w) &\defeq
  \hat g(w) + \nonumber\\
  &\frac{1}{2n} \sum_{i=1}^n w^\top \Exv[t_i \sim T(x_i)]{\Delta_{t_i, x_i} \Delta_{t_i, x_i}^\top} l''(w^\top \psi(x_i)) w \, .
  \label{eqn:second-order-approx}
\end{align}
}
For a fixed $w$, this error term is exactly the variance of the output
$w^\top \phi(X)$, where the true data $X$ is assumed to be sampled from the empirical
data points $x_i$ and their augmented versions specified by $T(x_i)$, weighted
by $l''(w^\top \psi(x_i))$.
This data-dependent regularization term favors weight vectors that produce
similar outputs $w^T \phi(x)$ and $w^T \phi(x')$ if $x'$ is a transformed version of
$x$.

\subsection{Validation of Approximation}
\label{sec:validation}
We empirically validate\footnote{Code to reproduce experiments and plots is available at \small{\url{https://github.com/HazyResearch/augmentation_code}}} the first- and second-order approximations, $\hat{g}(w)$ and $\tilde{g}(w)$,
on MNIST~\citep{lecun1998gradient} and CIFAR-10~\citep{krizhevsky2009learning} datasets, performing rotation, crop, or blur as augmentations, and using either an RBF kernel with random Fourier features~\citep{rahimi2007random} or LeNet (details in Appendix~\ref{subsec:first_second_approx_details}) as a base model.
Our results show that while both approximations perform reasonably well, the second-order approximation indeed results in a better approximation of the actual objective than the first-order approximation alone, validating the significance of the variance regularization component of data augmentation. 
In particular, in Figure~\ref{fig:newobj}, we plot the difference after 10
epochs of SGD training, between the actual objective function over augmented
data $g(w)$ and:
\begin{enumerate*}[label=(\roman*)]
  \item the first-order approximation $\hat{g}(w)$,
  \item second-order approximation $\tilde{g}(w)$, and
  \item second-order approximation without the first-order term, $f(w) + (\tilde{g}(w) - \hat{g}(w))$.
\end{enumerate*}
As a baseline, we plot these differences relative to the difference between the augmented and non-augmented objective (i.e., the original images), $f(w)$.
In Figure~\ref{fig:newpreddisagreement}, to see how training on approximate objectives affect the predicted test values, we plot the prediction disagreement between the model trained on true objective and the models trained on approximate objectives.
Finally, Figure~\ref{fig:newtrace} shows that these approximations are relatively stable in terms of performance throughout the training process.
For the CIFAR-10 dataset and the LeNet model (Appendix~\ref{sec:extraexps}), the results are quite similar, though we additionally observe that the first-order approximation is very close to the model trained without augmentation for LeNet, suggesting that the data-dependent regularization of the second-order term may be the dominating effect in models with learned feature maps.

\iftoggle{arxiv}{
\captionsetup[sub]{font={8pt,sf}}
\begin{figure*}[t!]
  \centering
     \begin{subfigure}{0.33\linewidth}
    \centering
    \includegraphics[width=\textwidth,trim={0 0 0 0},clip]{new-figs/mnist/new-obj.pdf}
   \capspace
    \caption{Absolute Objective Difference}
    \label{fig:newobj}
  \end{subfigure}\hfill    \begin{subfigure}{0.33\linewidth}
    \centering     \includegraphics[width=\textwidth,trim={0 0 0 0},clip]{new-figs/mnist/new-kl.pdf}
    \capspace
    \caption{Prediction Disagreement}
    \label{fig:newpreddisagreement}
  \end{subfigure}\hfill   \begin{subfigure}{0.33\linewidth}
    \centering
        \includegraphics[width=\textwidth]{new-figs/mnist/objective_difference_kernel_rotation.pdf}
    \capspace
    \caption{Example Trace: Rotation}
    \label{fig:newtrace}
  \end{subfigure}\hfill  \caption{For the MNIST dataset, we validate that (a) the proposed approximate objectives $\hat{g}(w)$ and $\tilde{g}(w)$ are close to the true objective $g(w)$, and (b) training on the approximate objectives leads to similar predictions as training on the true objective. We plot the relative difference between the proposed approximations and the true augmented objective, in terms of difference in objective value~\eqref{fig:newobj} and resulting test prediction disagreement~\eqref{fig:newpreddisagreement}, using the non-augmented objective as a baseline. The 2nd-order approximation closely matches the true objective, particularly in terms of the resulting predictions. We observe that the accuracy of the approximations remains stable throughout training~\eqref{fig:newtrace}. Full experiments are provided in Appendix~\ref{sec:extraexps}.}
  \label{fig:newvalid}
\end{figure*}
}
{}

\subsection{Connections to Prior Work}
The approximations we have provided in this section unify several seemingly disparate works.

\textbf{Invariant kernels.} The derived first-order approximation can capture prior work in \textit{invariant kernels} as a special case, when the transformations of interest form a
group and averaging features over the group induces
invariance~\citep{mroueh2015learning, raj2016local}. The form of the averaged
kernel can then be used to learn the invariances from data~\citep{vanderwilk2018learning}.

\textbf{Robust optimization.} Our work also connects to \textit{robust
  optimization}.
For example, previous work~\citep{bishop1995training,chapelle2001vicinal} shows that adding noise to input data has the effect of regularizing the model.
\citet{maurer2009empirical} bounds generalization error in terms of
the empirical loss and the variance of the estimator.
The second-order objective here adds a variance penalty term, thus optimizing generalization and
automatically balancing bias (empirical loss) and variance with respect to the
input distribution coming from the empirical data and their transformed versions
(this is presumably close to the population distribution if the transforms
capture the right invariance in the dataset).
Though the resulting problem is generally non-convex, it can be approximated by
a distributionally robust convex optimization
problem, which can be efficiently solved by a
stochastic procedure~\citep{hongseok2017variance, namkoong2016stochastic}.

\textbf{Tangent propagation.} In Section~\ref{sec:featavg}, we show that when applied to
neural networks, the described second-order objective can realize classical
\textit{tangent propagation} methods~\citep{simard1992tangent,
  simard1998transformation,zhao2017marginalized} as a special case.
More precisely, the second-order only term (orange in Figure~\ref{fig:newvalid})
is equivalent to the approximation described in~\citet{zhao2017marginalized},
proposed there in the context of regularizing CNNs.
Our results indicate that considering both the first- and second-order
  terms, rather than just this second-order component, in fact results in a more
  accurate approximation of the true objective, e.g., providing a 6--9x reduction in the resulting test prediction disagreement (Figure~\ref{fig:newpreddisagreement}).
This suggests an approach to improve classical tangent propagation methods, explored in Section~\ref{sec:featavg}.

\section{Practical Connections: Accelerating Training With Data Augmentation}
\label{sec:deep_learning}

We now present several proof-of-concept applications to illustrate how the theoretical insights in Section~\ref{sec:kernels} can be used to accelerate training with data augmentation. First, we propose a kernel similarity metric that can be used to quickly predict the utility of potential augmentations, helping to obviate the need for guess-and-check work. Next, we explore ways to reduce training computation over augmented data, including incorporating augmentation directly in the learned features with a random Fourier features approach, and applying our derived approximation at various layers of a deep network to reduce overall computation. We perform these experiments on common benchmark datasets, MNIST and CIFAR-10, as well a real-world mammography tumor-classification dataset, DDSM.

\subsection{A Fast Kernel Metric for Augmentation Selection}
\label{subsec:kernel_alignment}

\begin{figure*}[t]
  \centering
  \includegraphics[width=.75\linewidth, trim={0 352 0 0}, clip]{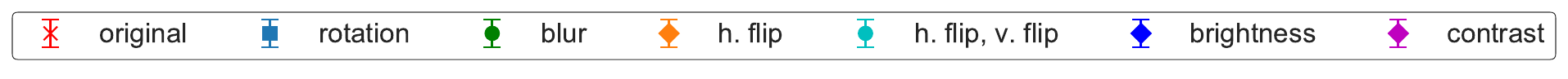}
   \begin{subfigure}{0.49\linewidth}
    \centering
    \includegraphics[width=\linewidth, trim={0 30 0 80}, clip]{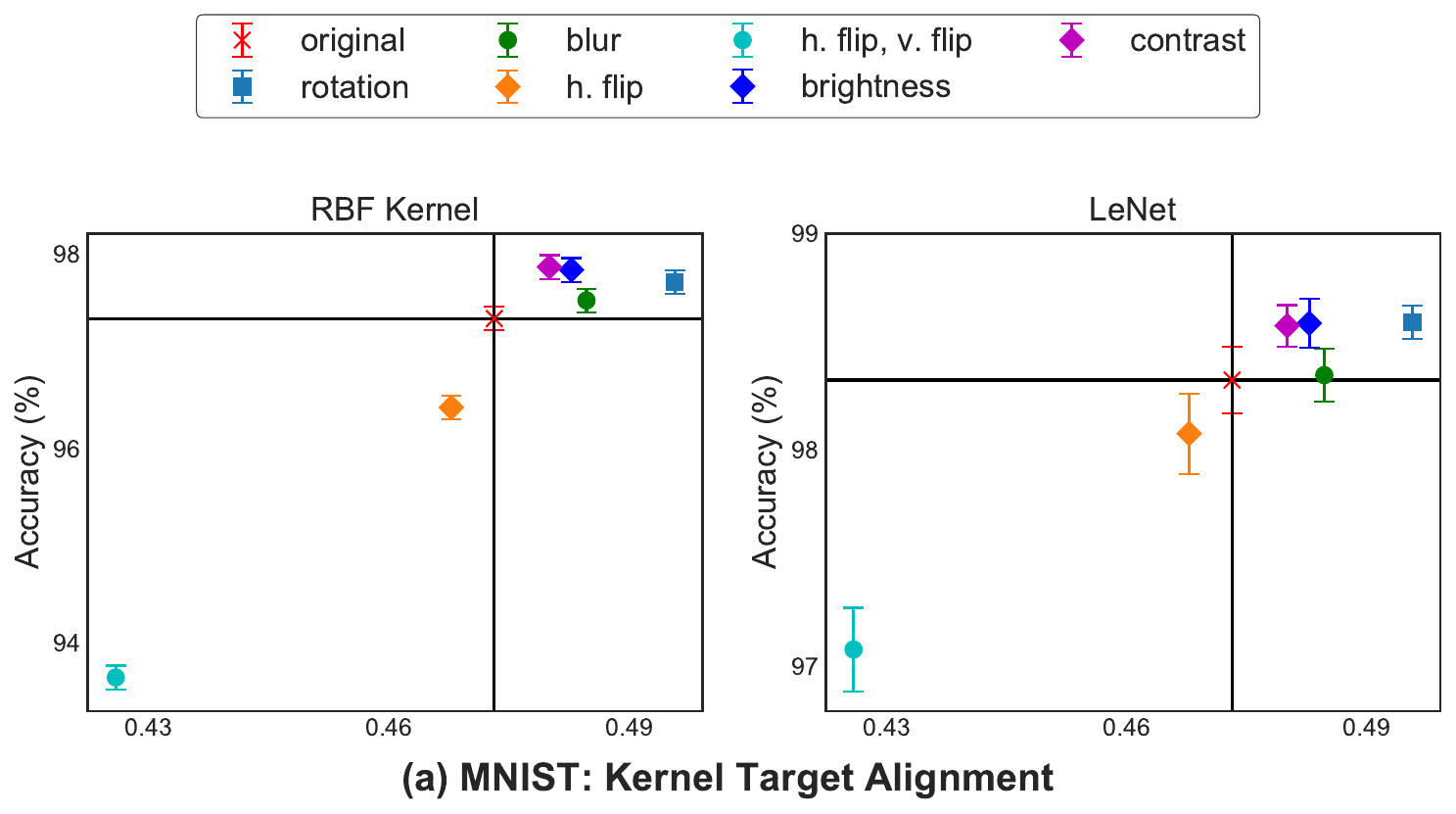}
    \caption{MNIST: Kernel Target Alignment}
    \label{fig:mnist_alignment}
  \end{subfigure}\hfill     \begin{subfigure}{0.49\linewidth}
    \centering
    \includegraphics[width=\linewidth, trim={0 30 0 80}, clip]{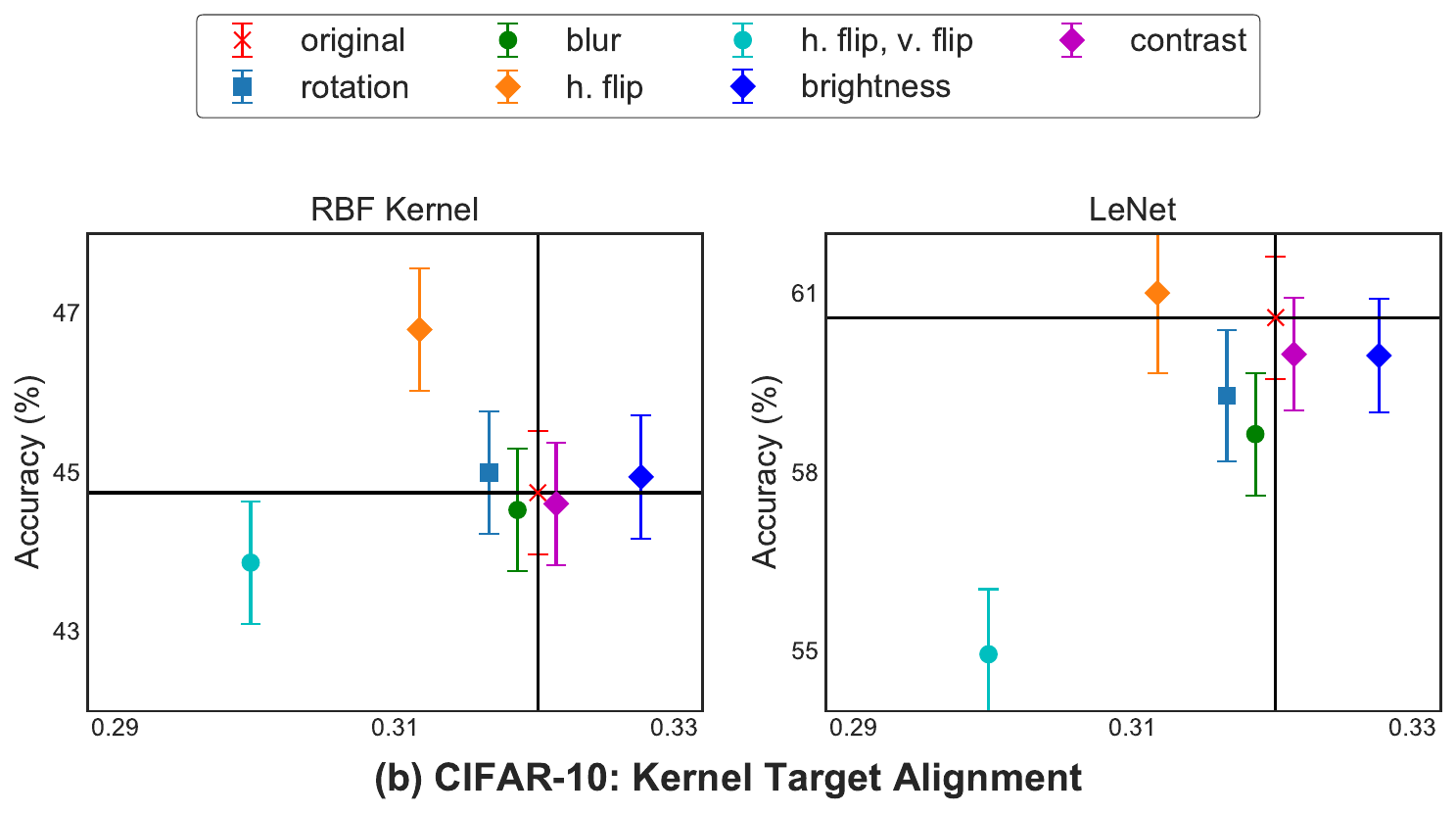}
    \caption{CIFAR-10: Kernel Target Alignment}
    \label{fig:cifar_alignment}
  \end{subfigure}
  \caption{Accuracy vs. kernel target alignment for RBF kernel and LeNet models, for MNIST (left) and CIFAR-10 (right) datasets. This alignment metric can be used to quickly select transformations (e.g., MNIST: rotation) that improve performance and avoid bad transformations (e.g., MNIST: flips).}
  \label{fig:accuracy_vs_alignment}
\end{figure*}

For new tasks and datasets, manually selecting, tuning, and composing augmentations is one of the most time-consuming processes in a machine learning pipeline, yet is critical to achieving state-of-the-art performance.
Here we propose a kernel alignment metric, motivated by our theoretical framework, to quickly estimate if a transformation is likely to improve generalization
performance \emph{without performing end-to-end training}.

\textbf{Kernel alignment metric.} Given a transformation $T$, and an original feature map $\phi(x)$, we can leverage our analysis in Section~\ref{subsec:feature_averaging} to approximate the features for each data point
$x$ as $\psi(x) = \Exv[t \sim T(x)]{\phi(t)}$.
Defining the feature kernel $\bar{K}(x, x') = \psi(x)^\top \psi(x')$ and the label kernel
$K_Y(y, y') = \indic{y = y'}$, we can compute the \emph{kernel target alignment}
\citep{cristianini2002kernel} between the feature kernel $\bar{K}$ and the target
kernel $K_Y$ \textit{without training}:
\begin{equation*}
  \hat{A}(X, \bar{K}, K_Y) = \frac{\langle \bar{K}, K_Y \rangle}{\sqrt{\langle \bar{K}, \bar{K}
      \rangle \langle K_Y, K_Y \rangle}} \, ,
\end{equation*}
where $\langle K_a, K_b \rangle = \sum_{i, j}^{n} K_a(x_i, x_j) K_b(x_i, x_j)$.
This alignment statistic can be estimated quickly and accurately from subsamples
of the data~\citep{cristianini2002kernel}.
In our case, we use random Fourier features~\citep{rahimi2007random} as an approximate feature map $\phi(x)$
and sample $t \sim T(x)$ to estimate the averaged feature $\psi(x) = \Exv[t \sim
T(x)]{\phi(t)}$.
The kernel target alignment measures the extent to which points in the same
class have similar features.
If this alignment is larger than that between the original feature kernel $K(x,
x') = \phi(x)^\top \phi(x)$ and the target kernel, we postulate that the transformation
$T$ is likely to improve generalization.
We validate this method on MNIST and CIFAR-10 with numerous transformations (rotation,
blur, flip, brightness, and contrast).
In Figure~\ref{fig:accuracy_vs_alignment}, we plot the accuracy of the kernel classifier and LeNet against the kernel
target alignment. We see that there is indeed a correlation between kernel alignment and accuracy, as points tend to cluster in the upper right (higher alignment, higher accuracy) and lower left (lower alignment, lower accuracy) quadrants, indicating that this approach may be practically useful to detect the utility of a transformation prior to training.

\subsection{Efficient Augmentation via Random Fourier Features}
\label{sec:augmented_rff}
Beyond predicting the utility of an augmentation, we can also use our theory to reduce the computation required to perform augmentation on a kernel classifier---resulting, e.g., in a 4x speedup while achieving the same accuracy (MNIST, Table~\ref{tab:augmented_Fourier}). When the transformations are affine (e.g.,
rotation, translation, scaling, shearing), we can perform transforms \textit{directly on the approximate kernel features}, rather than the raw data points, thus gaining efficiency while maintaining accuracy.

Recall from Section~\ref{sec:kernels} that the first-order approximation of the new feature map is given by $\psi(x) = \Exv[t
\sim T(x)]{\phi(t)}$, i.e., the average feature of all the transformed versions of
$x$.
Suppose that the transform is linear in $x$ of the form $A_\alpha x$, where the
transformation is parameterized by $\alpha$.
For example, a rotation by angle $\alpha$ has the form $T(x) = R_\alpha x$, where $R_\alpha$ is
a $d \times d$ matrix that 2D-rotates the image $x$. Further, assume that the original kernel $k(x, x')$ is shift-invariant (say an RBF
kernel), so that it can be approximated by random Fourier features
\citep{rahimi2007random}. Instead of transforming the data point $x$ itself, we can transform the averaged feature map for $x$ directly as:
\iftoggle{arxiv}{
\[
  \tilde{\psi}(x) = s^{-1} D^{-1/2} \begin{bmatrix} \sum_{j=1}^{s} \exp(i (A_{\alpha_j}^\top
    \omega_1)^\top x) & \dots & \sum_{j=1}^{s} \exp(i (A_{\alpha_j}^\top \omega_D)^\top x) \end{bmatrix},
\]
}{
\begin{equation*}
  \textstyle
  \tilde{\psi}(x)_k = \frac{1}{s\sqrt{D}} \sum_{j=1}^{s} \exp(i (A_{\alpha_j}^\top
    \omega_k)^\top x), \quad k = 1, \dots, D,
\end{equation*}
}
where $\omega_1, \dots, \omega_D$ are sampled from the spectral distribution, and $\alpha_1,
\dots, \alpha_s$ are sampled from the distribution of the parameter $\alpha$ (e.g.,
uniformly from $[-15, 15]$ if the transform is rotation by $\alpha$ degrees).
This type of random feature map has been suggested by \citet{raj2016local} in
the context of kernels invariant to actions of a group.
Our theoretical insights in Section~\ref{sec:kernels} thus connect data
augmentation to invariant kernels, allowing us to leverage the approximation
techniques in this area.
Our framework highlights additional ways to improve this procedure: if we view augmentation as a modification of the feature map, we naturally apply this feature map to test data points as well, implicitly reducing the variance in the
features of different versions of the same data point.
This variance regularization is the second goal of data augmentation discussed in
Section~\ref{sec:kernels}.

\begin{table*}[h!]
\small
  \caption{Performance of augmented random Fourier features on MNIST, CIFAR-10, and DDSM.}
  \label{tab:augmented_Fourier}
    \centering
  \begin{tabular}{c c c c c c c}
  \toprule
    \textbf{Model}          & \multicolumn{2}{c}{\textbf{MNIST}} & \multicolumn{2}{c}{\textbf{CIFAR-10}} & \multicolumn{2}{c}{\textbf{DDSM}} \\
                             & Acc.\ (\%)     & Time             & Acc.\ (\%)     & Time                 & Acc.\ (\%)     & Time \\
    \midrule
    No augmentation          & 96.1 $\pm$ 0.1 & 34s              & 39.4 $\pm$ 0.5 & 51s                  & 57.3 $\pm$ 6.7 & 27s \\
    Traditional augmentation & 97.6 $\pm$ 0.2 & 220s             & 45.3 $\pm$ 0.5 & 291s                 & 59.4 $\pm$ 3.2 & 61s \\
    Augmented RFFs           & 97.6 $\pm$ 0.1 & 54s              & 45.2 $\pm$ 0.4 & 124s                 & 58.8 $\pm$ 5.1 & 34s \\
    \bottomrule
  \end{tabular}
  \end{table*}

We validate this approach on standard image datasets
MNIST and CIFAR-10, along
with a real-world mammography tumor-classification dataset called Digital
Database for Screening Mammography (DDSM)~\citep{heath2000digital,
  clark2013cancer, lee2016curated}.
DDSM comprises 1506 labeled mammograms, to be classified as benign versus
malignant tumors.
In Table~\ref{tab:augmented_Fourier}, we compare: (i) a baseline model trained
on non-augmented data, (ii) a model trained on the true augmented objective, and
(iii) a model that uses augmented random Fourier features.
We augment via rotation between $-15$ and 15 degrees.
All models are RBF kernel classifiers with 10,000 random Fourier features, and
we report the mean accuracy and standard deviation over 10 trials.
To make the problem more challenging, we also randomly rotate the test data
points.
The results show that augmented random Fourier features can retain 70-100\% of
the accuracy boost of data augmentation, with 2-4x reduction in training time.

\subsection{Intermediate-Layer Feature Averaging for Deep Learning}
\label{sec:featavg}

Finally, while our theory does not hold exactly given the non-convexity of the objective, we show that our theoretical framework also suggests ways in which augmentation can be efficiently applied in deep learning pipelines. In particular, let the first $k$ layers of a deep neural network define a feature map $\phi$, and the remaining layers define a non-linear function $f(\phi(x))$.
The loss on each data point is then of the form $\Exv[t_i \sim T(x_i)]{l(f(\phi(t_i)); y_i)}$.
Using the second-order Taylor expansion around $\psi(x_i) = \Exv[t_i \sim T(x_i)]{\phi(t_i)}$, we
obtain the objective:
\iftoggle{arxiv}{
\[
\textstyle
  \frac{1}{n} \sum_{i=1}^{n} l(f(\psi(x_i)); y_i) + \frac{1}{2} \Exv[t_i \sim T(x_i)]{(\phi(t_i)
    - \psi(x_i))^\top \nabla^2_{\psi(x_i)} l(f(\psi(x_i)); y_i) (\phi(t_i) - \psi(x_i))} \, .
\]
}{
\begin{dmath*}
  \frac{1}{n} \sum_{i=1}^{n} l(f(\psi(x_i)); y_i) + \frac{1}{2} \Exv[t_i \sim T(x_i)]{(\phi(t_i)
    - \psi(x_i))^\top \nabla^2_{\psi(x_i)} l(f(\psi(x_i)); y_i) (\phi(t_i) - \psi(x_i))} \, .
\end{dmath*}
}
If $f(\phi(x)) = w^\top \phi(x)$, we recover the result in Section~\ref{sec:kernels}~(Equation~\ref{eqn:second-order-approx}). Operationally, we can carrying out the forward pass
on all transformed versions of the data points up to layer $k$ (i.e., computing
$\phi(t_i)$), and then averaging the features and continuing with the remaining layers using
this averaged feature, thus reducing computation.

\begin{figure}[t]
 \centering
   \begin{subfigure}{\linewidth}
    \centering
    \hspace{.7em}
    \includegraphics[width=.9\linewidth, trim={0 312 0 0}, clip]{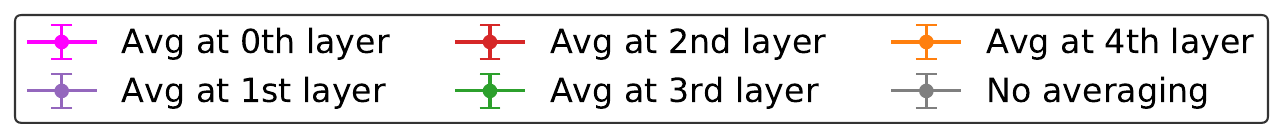}
  \end{subfigure}
  \begin{subfigure}{0.48\linewidth}
    \centering
        \includegraphics[width=\linewidth]{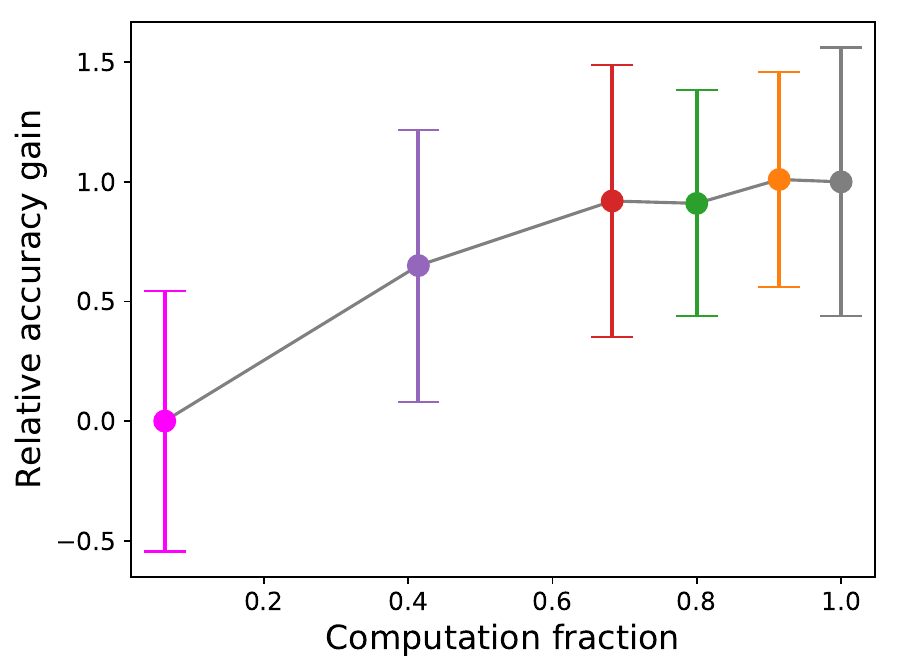}
    \caption{MNIST}
    \label{fig:accuracy_vs_computation_mnist}
  \end{subfigure} \hfill
  \begin{subfigure}{.48\linewidth}
    \centering
            \includegraphics[width=\linewidth]{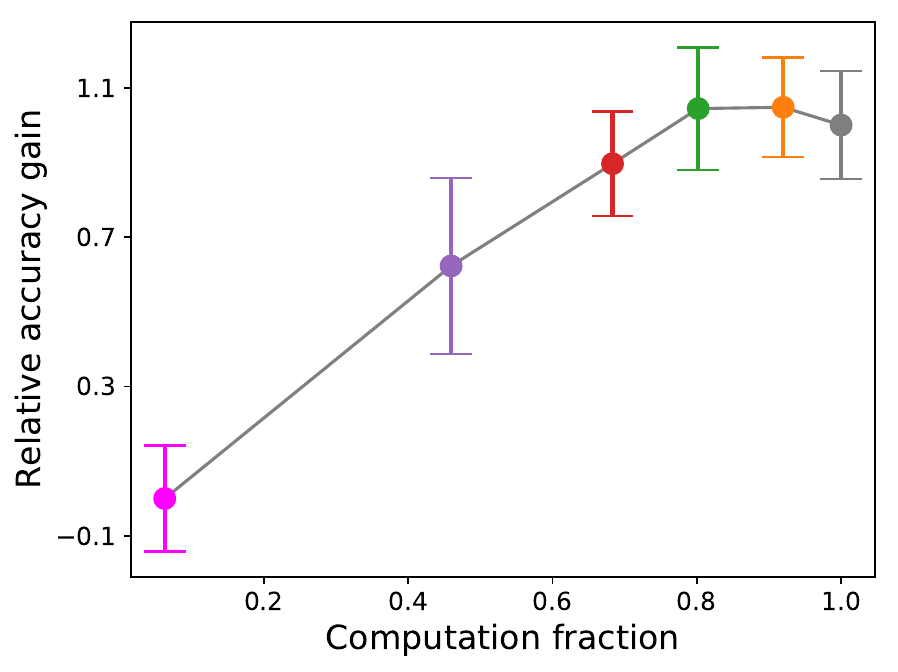}
    \caption{CIFAR-10}
    \label{fig:accuracy_vs_computation_cifar}
  \end{subfigure}
  \caption{Accuracy gain relative to baseline (no augmentation) when averaging at various layers of a LeNet network. Approximation at earlier layers saves computation but can reduce the fidelity of the approximation.
    }
  \label{fig:layers_avg}
\end{figure}

We train with this approach, applying the approximation at various layers of a LeNet network using rotation as the augmentation.
To get a rough measure of tradeoff between accuracy of the model and computation, we record the fraction of time spent at each layer
in the forward pass, and use this to measure the expected reduction in computation when approximating at layer $k$.
In Figure~\ref{fig:layers_avg}, we plot the relative accuracy gain
of the classifier when trained on approximate objectives against the fraction of
computation time, where $0$ corresponds to accuracy (averaged over 10 trials) of
training on original data, and $1$ corresponds to accuracy of training on true
augmented objective $g(w)$. These results indicate, e.g., that this approach can reduce computation by 30\%, while maintaining 92\% of the accuracy gain (red, Figure~\ref{fig:accuracy_vs_computation_mnist}). In Appendix~\ref{app:featavg}, we demonstrate similar results in terms of the test prediction distribution throughout training.

\textbf{Connection to tangent propagation.} If we perform the described averaging \textit{before the very first layer} and use the analytic form of
the gradient with respect to the transformations (i.e., \emph{tangent
vectors}), this procedure recovers \emph{tangent propagation}~\citep{simard1992tangent}.
The connection between augmentation and tangent propagation in this special
case was recently observed in~\citet{zhao2017marginalized}. However, as we see in Figure~\ref{fig:layers_avg}, applying the approximation at the first layer (standard tangent propagation) can in fact yield very poor accuracy results---similar to performing no augmentation---showing that our more general approximation can improve this approach in practice.

\section{Conclusion}\label{sec:conclusion}
We have taken steps to establish a theoretical base for modern data augmentation. First, we analyze a general Markov process model and show that the $k$-nearest neighbors classifier applied to augmented data is asymptotically
equivalent to a kernel classifier, illustrating the effect that augmentation has on downstream representation. Next we show that local transformations for data augmentation can be approximated by first-order feature averaging and second-order variance regularization components, having the effects of inducing invariance and reducing model complexity.
We use our insights to suggest ways to accelerate training for kernel and deep learning pipelines. Generally, a tension exists between incorporating domain knowledge more
naturally via data augmentation, or through more principled kernel approaches.
We hope our work will enable easier translation between these two paths,
leading to simpler and more theoretically grounded applications of data
augmentation.

\iftoggle{arxiv}{
  \subsubsection*{Acknowledgments}

  We thank Fred Sala and Madalina Fiterau for helpful discussion, and Avner May
  for providing detailed feedback on earlier versions.

  We gratefully acknowledge the support of DARPA under Nos.\ FA87501720095 (D3M) and FA86501827865 (SDH), NIH under No.\ N000141712266 (Mobilize), NSF under Nos.\ CCF1763315 (Beyond Sparsity) and CCF1563078 (Volume to Velocity), ONR under No.\ N000141712266 (Unifying Weak Supervision), the Moore Foundation, NXP, Xilinx, LETI-CEA, Intel, Google, NEC, Toshiba, TSMC, ARM, Hitachi, BASF, Accenture, Ericsson, Qualcomm, Analog Devices, the Okawa Foundation, and American Family Insurance, and members of the Stanford DAWN project: Intel, Microsoft, Teradata, Facebook, Google, Ant Financial, NEC, SAP, and VMWare. The U.S.\ Government is authorized to reproduce and distribute reprints for Governmental purposes notwithstanding any copyright notation thereon. Any opinions, findings, and conclusions or recommendations expressed in this material are those of the authors and do not necessarily reflect the views, policies, or endorsements, either expressed or implied, of DARPA, NIH, ONR, or the U.S.\ Government.
}

\bibliographystyle{icml2019}
\bibliography{refs}

\clearpage

\onecolumn

\appendix

\section{Omitted Proofs and Results From Section~\ref{sec:knn}}\label{sec:knn-proofs}

Here we provide additional details and proofs from Section~\ref{sec:knn}. First, we prove Lemma~\ref{lmm:stationary} characterizing the stationary distribution of the Markov chain augmentation process.

\begin{proof}[Proof of Lemma~\ref{lmm:stationary}]
  Recall that the stationary distribution satisfies $\pi R = \pi$.

  Under the given notation, we can express $R$ as
  \[
    R = \frac{A + \mathbf{1}\rho^\top}{\beta+1}.    \]
    Assume for now that $I(\beta+1)-A$ is invertible.
  Notice that
\begin{align*}
    \rho^\top (I(\beta+1)-A)^{-1} \frac{A}{\beta+1}
    &=
    \rho^\top (I(\beta+1)-A)^{-1} \left(\frac{A-I(\beta+1)}{\beta+1}+I\right)
    \\&=
    -\frac{\rho^\top}{\beta+1} + \rho^\top (I(\beta+1)-A)^{-1}
\end{align*}
Also, $ A\mathbf{1} = \sum_j \beta_j (A_j\mathbf{1}) = \beta\mathbf{1}$, so we know that $( I(\beta+1)-A) \mathbf{1} = \mathbf{1}$. So the inverse satisfies $(I(\beta+1)-A)^{-1} \mathbf{1} = \mathbf{1}$ as well.
  Thus,
\begin{align*}
    \rho^\top (I(\beta+1)-A)^{-1} R
    &=
    \rho^\top (I(\beta+1)-A)^{-1} \frac{A}{\beta+1}
    +
    \rho^\top \frac{\mathbf{1} \rho^\top}{\beta+1}
    \\&=
    -\frac{\rho^\top}{\beta+1} + \rho^\top (I(\beta+1)-A)^{-1} + \frac{\rho^\top}{\beta+1} 
    \\&=
    \rho^\top (I(\beta+1)-A)^{-1}
\end{align*}
  It follows that $\pi = \rho^\top (I(\beta+1)-A)^{-1}$ is the stationary distribution of this chain.

  Finally, we show that $I(\beta+1)-A$ is invertible as follows.
  By the Gershgorin Circle Theorem, the eigenvalues of $A$ lie in the union of the discs $B(A_{ii}, \sum_{j\neq i} |A_{ij}|) = B(A_{ii}, \beta - A_{ii})$. In particular, the eigenvalues have real part bounded by $\beta$. Thus $I(\beta+1)-A$ has all eigenvalues with real part at least $1$, hence is invertible.
  \end{proof}

For convenience, we now restate the main Theorem of Section~\ref{sec:knn}.
\newtheorem*{thm:knn}{Theorem~\ref{thm:knn}}
\begin{thm:knn}    Consider running the Markov chain augmentation process in Definition~\ref{def:markov} where the base augmentations preserve labels, and classifying an unseen example $x \in \cal{X}$ using $k$-nearest neighbors.
  Suppose that the $A_i$ are time-reversible with equal stationary distributions.
  Then there are coefficients $\alpha_{z_i}$ and a kernel $K$ depending only on the augmentations, such that in the limit as the number of augmented examples goes to infinity, this classification procedure is equivalent to a kernel classifier
  \begin{equation}
    \label{app:kernel-classifier}
    \hat y = \operatorname*{sign} \sum_{i=1}^n y_i\alpha_{z_i} K_{x_i, x} \, .
  \end{equation}
   \end{thm:knn}

For the remainder of this section, we will refer to $K$ alternately as a matrix $\R^{\Omega\times\Omega}$ or as a function $\Omega\times\Omega \to \R$, with corresponding notation $K_{z_i,z_j}$ and $K(z_i,z_j)$.

\textbf{Classification process.} Suppose that we receive a new example $x \in \mathcal{X}$ with unknown label $y$.
Consider running the augmentation process for time $T$ and determining the label for $x$ via $k$-nearest neighbors.\footnote{This works for any label-preserving non-parametric model.}
Then in the limit as $T \to \infty$,
we predict
\begin{equation}
  \label{eq:classification}
  \hat{y} = \argmax_{y \in \mathcal{Y}} \pi( (x,y) ) \, .
\end{equation}
In other words, as the number of augmented training examples increases, $k$-NN approaches a Bayes classifier: it will select the class for $x$ that is most probable in $\pi$.

We now show that under additional mild assumptions on the base augmentations $A_j$, applying this classification process after the Markov chain augmentation process is equivalent to a kernel classifier.
In particular, \emph{suppose that the Markov chains corresponding to the $A_j$ are all time-reversible and there is a positive distribution $\pi_0$ that is stationary for all $A_j$}.
This condition is not restrictive in practice, as we discuss in Section~\ref{sec:discussion}.
Under these assumptions, the stationary distribution can be expressed in terms of a kernel matrix.

\begin{lemma}  \label{lmm:kernel}
  The stationary distribution~\eqref{eq:stationary} can be written as $\pi = \alpha^\top K$, where the vector $\alpha \in \R^\Omega$ is supported only on the dataset $z_1, \dots, z_n$, and $K$ is a kernel matrix (i.e., $K$ symmetric positive definite and non-negative) depending only on the augmentations $A_j, \beta_j$.
  \end{lemma}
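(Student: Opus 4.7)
My plan is to use the time-reversibility assumption to symmetrize the matrix $M \eqdef (I(\beta+1) - A)^{-1}$ appearing in Lemma~\ref{lmm:stationary}, and then read off the kernel from the resulting factorization. First I would recall that reversibility of $A_i$ with respect to the common stationary distribution $\pi_0$ is exactly the detailed-balance identity $\Pi_0 A_i = A_i^\top \Pi_0$, where $\Pi_0 \eqdef \Diag{\pi_0}$; equivalently $\Pi_0 A_i$ is symmetric. By linearity, $\Pi_0 A = \sum_i \beta_i \Pi_0 A_i$ is symmetric, hence so is $\Pi_0(I(\beta+1) - A)$. Inverting a symmetric invertible matrix yields a symmetric matrix, so $\Pi_0 M$ is symmetric as well.

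\textbf{Extracting $\psi$ and $K$.} With this identity in hand, I would set $K \eqdef \Pi_0 M$ and $\psi \eqdef \Pi_0^{-1} \rho$, so that Lemma~\ref{lmm:stationary} rearranges to
\[
  \pi^\top = \rho^\top M = (\Pi_0^{-1}\rho)^\top (\Pi_0 M) = \psi^\top K,
\]
which is the desired representation. Because $\rho = \sum_j \gamma_j e_{z_j}$ is supported on the training points and $\Pi_0^{-1}$ is diagonal, $\psi$ inherits this support; and $K$ depends only on the $A_i$, $\beta_i$ (through $A$ and the common reversible distribution $\pi_0$), as claimed.

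\textbf{Verifying kernel properties.} Symmetry of $K$ is the calculation just made. For positive-definiteness I would conjugate by $\Pi_0^{1/2}$ to obtain
\[
  S \eqdef \Pi_0^{1/2}(I(\beta+1) - A)\Pi_0^{-1/2} = (\beta+1)I - \sum_{i=1}^m \beta_i \bigl(\Pi_0^{1/2} A_i \Pi_0^{-1/2}\bigr);
\]
each bracketed term is symmetric (again by reversibility) and similar to a stochastic matrix, so its spectrum lies in $[-1,1]$. Therefore $\norm{\sum_i \beta_i \Pi_0^{1/2} A_i \Pi_0^{-1/2}} \le \beta$, and $S$ is symmetric with spectrum in $[1, 2\beta+1]$. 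Consequently $K = \Pi_0^{1/2} S^{-1} \Pi_0^{1/2}$ is symmetric positive-definite. For entrywise nonnegativity, I would expand $M$ as the Neumann series $\tfrac{1}{\beta+1}\sum_{k\ge 0} (A/(\beta+1))^k$, which converges because the spectral radius of $A/(\beta+1)$ is at most $\beta/(\beta+1) < 1$; since $A$ is a nonnegative combination of stochastic matrices, every power has nonnegative entries, so $M \ge 0$ entrywise, and multiplying by the nonnegative diagonal $\Pi_0$ preserves this.

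\textbf{Main obstacle.} The conceptual crux is the first observation: that reversibility with respect to a common $\pi_0$ is exactly the condition which makes $(I(\beta+1) - A)^{-1}$ similar to a symmetric positive-definite matrix via the reweighting $\Pi_0^{1/2}$. Once this symmetrization is identified, positive-definiteness reduces to the operator-norm bound $\norm{\sum_i \beta_i B_i} \le \beta$ on a sum of symmetric contractions, and entrywise nonnegativity reduces to the standard Neumann-series argument for a matrix of spectral radius strictly less than one, so no further nontrivial work is required.
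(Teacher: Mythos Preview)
Your argument is correct and mirrors the paper's strategy: symmetrize $(I(\beta+1)-A)^{-1}$ via the detailed-balance identity $\Pi_0 A_i = A_i^\top \Pi_0$, deduce positive-definiteness by conjugating with $\Pi_0^{1/2}$, and obtain entrywise nonnegativity from the Neumann series. The only cosmetic differences are the choice of factorization (you take $\psi=\Pi_0^{-1}\rho$, $K=\Pi_0 M$, whereas the paper sets $\psi=\Pi_0\rho$, $K=(\Pi_0(\beta+1)-\Pi_0 A)^{-1}$) and your spectral-norm bound on $\sum_i \beta_i \Pi_0^{1/2}A_i\Pi_0^{-1/2}$ in place of the paper's appeal to the Gershgorin argument from Lemma~\ref{lmm:stationary}.
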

\begin{proof}[Proof of Lemma~\ref{lmm:kernel}]
  Let $\Pi_0 = \operatorname*{diag}(\pi_0)$. The stationary distribution can be written as
  \begin{align*}
    \pi &= \rho^\top (I(\beta+1)-A)^{-1}
    \\&= \rho^\top ((\beta+1)\Pi_0^{-1}\Pi_0 -  A\Pi_0^{-1}\Pi_0)^{-1}
    \\&= \rho^\top \Pi_0^{-1} ((\beta+1)\Pi_0^{-1} - A\Pi_0^{-1})^{-1}
  \end{align*}
  ($\Pi_0$ is invertible from the assumption that $\pi_0$ is supported everywhere).

  Letting $\alpha = \Pi_0^{-1}\rho$ and $K = ((\beta+1)\Pi_0^{-1} - A\Pi_0^{-1})^{-1}$, we have $\pi = \alpha^\top K$.
  Clearly, $\alpha$ is supported only on the dataset $z_1, \dots, z_n$ since $\rho$ is.
  It remains to show that $K$ is a kernel.

  The detailed balance condition of time-reversible Markov chains states that $\pi_0(u)A_j(u,v) = \pi_0(v)A_j(v,u)$ for all augmentations $A_1,\dots,A_m$ and $u,v \in \Omega$.
  This can be rewritten $A_j(u,v)\pi_0(v)^{-1} = A_j(v,u)\pi_0(u)^{-1}$ or $A_j \Pi_0^{-1} = (A_j \Pi_0^{-1})^\top$, so that $A_j\Pi_0^{-1}$ is symmetric.
  This implies that $A\Pi_0^{-1}$ and in turn $K$ are symmetric.

      The positivity of $K$ follows from the Gershgorin Circle Theorem, similar to the last part of the proof of Lemma~\ref{lmm:stationary}.
  To show this, it suffices to show positivity of $((\beta+1)I-A)\Pi_0^{-1} = \Pi_0^{-1}\Pi_0((\beta+1)I-A)\Pi_0^{-1}$; we have already shown it is symmetric.
  Thus it suffices to show positivity of $Z = \Pi_0((\beta+1)I-A)$.\footnote{This follows from the characterization of $A$ positive definite as $x^\top A x > 0 \, \forall x \ne 0$.}
In particular, the eigenvalues of $Z$ are in the union of the discs $D_i = B((\pi_0)_i (\beta+1-A_{ii}), (\pi_0)_i (\beta-A_{ii}))$.\footnote{Where $B(x,r)$ is the ball centered at $x$ of radius $r$.}
Note that $D_i$ has real part at least $(\pi_0)_i$, and therefore the eigenvalues of $Z$ are at least $\min_i (\pi_0)_i > 0$.

  Finally, we need to show that $K$ is a nonnegative matrix; it suffices to show this for $(I(\beta+1)-A)^{-1}$ .
  Note that $\frac{1}{\beta}A$ is a stochastic matrix, hence the spectral radius of $\frac{A}{\beta+1}$ is bounded by $\frac{\beta}{\beta+1} < 1$. Therefore we can expand
  \begin{align*}
    &(I(\beta+1)-A)^{-1}
    \\&= \frac{1}{\beta+1}\left( I - \frac{A}{\beta+1} \right)^{-1}
    \\&= \frac{1}{\beta+1}\sum_{n=0}^\infty \left( \frac{A}{\beta+1} \right)^n.
  \end{align*}
  This is a sum of nonnegative matrices, so $K$ is nonnegative.
\end{proof}

By Lemma~\ref{lmm:kernel}, since $\alpha$ is supported only on the training set $z_1, \dots, z_n$, the stationary probabilities can be expressed as
$\pi(z) = \sum_{u \in \Omega} \alpha(u) K(u, z) = \sum_{i=1}^n \alpha(z_i) K(z_i, z)$.

Expanding the classification rule~\eqref{eq:classification} using this explicit representation yields
\begin{align*}
  \hat y
  &=
  \argmax_{y \in \{-1, 1\}}
  \sum_{i=1}^n \alpha(z_i) K((x_i, y_i), (x, y))
  \\&= \operatorname*{sign} \sum_{y\in\{-1,1\}} y\sum_{i=1}^n \alpha(z_i) K((x_i, y_i), (x, y)) \, .
\end{align*}

Finally, suppose, as is common practice, that our augmentations $A_j$ do not change the label $y$. In this case, we overload the notation $K$ so that $K(x_1,x_2) := K((x_1,1),(x_2,1)) = K((x_1,-1), (x_2,-1))$\footnote{The intuition is that $K$ measures the similarity between examples in $\Omega$ in terms of how hard it is to augment one to the other, and this distance is the same whether $y$ is $1$ or $-1$ in the label-preserving case. Formally, a $K$ satisfying this condition exists because the Markov chain is not irreducible, and an appropriate $\pi_0$ putting equal weights on $y=\pm 1$ can be chosen in Lemma~\ref{lmm:kernel}. Note also that $K((x_1, 1), (x_2, -1)) = K((x_1, -1), (x_2, 1)) = 0$ by the same intuition; formally, $A$ is a block matrix with $A((x_1, y), (x_2,-y)) = 0$ so $K = \Pi_0(I(\beta+1)-A)^{-1}$ has the same property}, and the classification simplifies to equation~\eqref{app:kernel-classifier},
\\the classification rule for a kernel-trick linear classifier using kernel $K$. Thus, \emph{$k$-NN with data augmentation is asymptotically equivalent to a kernel classifier}.
This completes the proof of Theorem~\ref{thm:knn}.

\textbf{Rate of Convergence.}
The rate at which the augmentation plus $k$-NN classifier approaches the kernel classifier can be decomposed into two parts: the rate at which the augmentation Markov chain mixes, and the rate which the $k$-NN classifier approaches the true function.
The latter follows from standard generalization error bounds for the $k$-NN classifier.
For example, if the kernel regression function $L(x) = \sum_{i=1}^n \alpha(z_i) K(x_i, x)$ is smooth enough (e.g.\ Lipschitz) on the underlying space $\mathcal{X} = \R^d$, then the expected difference between the two classifiers of the probability of misclassifying new examples scales as $n^{-1/(2+d)}$, where $n$ is the number of samples (i.e.\ augmentation steps)~\citep{gyorfi2006distribution}.
Furthermore, the stationary distribution~\eqref{eq:stationary} can be further analyzed to yield the finite-sample distributions of the Markov chain, which is related to the power series expansion $\rho^\top(I(\beta+1)-A)^{-1} = \rho^\top(\beta+1)^{-1}(I + A/(\beta+1) + \dots)$ of Equation~\eqref{eq:stationary}.
This in turn determines the mixing rate of the Markov chain, which converges to its stationary distribution exponentially fast with rate $\beta/(\beta+1)$.
More formal statements and proofs of these bounds are in Appendix~\ref{sec:knn-convergence}.

\subsection{Discussion}\label{sec:discussion}
Here we describe our modeling assumptions in more detail, as well as additional uses of our main result in Theorem~\ref{thm:knn}.
First, we note that the assumptions needed for Lemmas~\ref{lmm:stationary} and~\ref{lmm:kernel} hold for most transformations used in practice.
For example, the condition behind Lemma~\ref{lmm:kernel}  is satisfied by ``reversible'' augmentations, or any augmentation that has equal probability of sending a point $x \in \Omega$ to $y$ as $y$ to $x$: these augmentations have symmetric transition matrices, which are time-reversible and have a uniform stationary distribution.
This class includes all deterministic lossless transformations, such as jittering, flips, and rotations for images.
Furthermore, lossy transformations can be combined with their inverse transformation (such as {\small \textsf{ZoomOut}} for {\small \textsf{ZoomIn}}), possibly adding a probability of not transitioning, to form a symmetric augmentation.\footnote{For example, if a lossy transform sends $a,b,c\to c$, with transition matrix $\begin{psmallmatrix} 0 & 0 & 1 \\ 0 & 0 & 1 \\ 0 & 0 & 1 \end{psmallmatrix}$, it can be symmetrized to $\begin{psmallmatrix} 2/3 & 0 & 1/3 \\ 0 & 2/3 & 1/3 \\ 1/3 & 1/3 & 1/3 \end{psmallmatrix}$.}

Second, our use of augmentation matrices implies a finite state space $\Omega$, which is defined as the set of base training examples and all possible augmentations.
Note that augmentations typically yield finite orbits (e.g.\ flip, rotation, zoom -- as output pixel values are a subset of input values), which is consistent with this assumption.
Furthermore, finiteness is always true in actual models due to the use of finite precision (e.g.\ floating point numbers).

Beyond serving as motivating connection between data augmentation, a process applied to the raw input data, and kernels, which affect the downstream feature representation, Theorem~\ref{thm:knn} also points to alternate ways to understand and optimize the augmentation pipeline.
In particular, Lemma~\ref{lmm:kernel} provides a closed-form representation for the induced
kernel in terms of the base augmentation matrices and rates, and we point out two potential ways this alternate classifier can be useful on top of the original augmentation process.

In Appendix~\ref{subsec:update-kernel} we show that if the augmentations are
changed,
for example by tuning the rates or adding/removing a base augmentation,
the kernel matrix can potentially be directly updated from the original kernel
(opposed to re-sampling an augmented dataset and re-training the classifier).

Second, many parameters of the original process appear in the kernel directly.
For example, in Appendix~\ref{subsec:noise-kernel} we show that in the simple case of a single additive Gaussian noise augmentation, the equivalent kernel is very close to a Gaussian kernel whose bandwidth is a function of the variance of the jitter.
Additionally, in general the augmentation rates $\beta_j$ all show up in the resulting kernel in a differentiable manner.
Therefore instead of treating them as hyperparameters, there is potential to optimize the underlying parameters in the base augmentations, as well as the augmentation rates, through their role in a more tractable objective.

\subsection{Convergence Rate}\label{sec:knn-convergence}

The following proposition from~\citet{gyorfi2006distribution} and~\citet{tibshirani2018} provides generalization bounds for a $k$-NN classifier when $k \to \infty$ and $k/n \to 0$ at a suitable rate.
Treating the equivalent kernel classifier as the true function, this bounds the risk between the $k$-NN and kernel classifiers as a function of the number of augmented samples $n$.
\begin{proposition}  Let $\hat{C}$ be the $k$-NN classifier. Let $C_0$ be the asymptotically equivalent kernel classifier from Theorem~\ref{thm:knn} and assume it is $L$-Lipschitz.

  Letting $r(C) = \Pr_{(x,y) \sim \pi}(y \neq C(x))$ be the risk of a classifier $C$, then
  \[
    r(C) - r(C_0) \le O(L^{d/(2+d)}n^{-1/(2+d)}).
  \]
\end{proposition}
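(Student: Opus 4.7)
The plan is to proceed along the classical route for nonparametric plug-in classifiers, following Györfi et al. The key idea is to reduce excess classification risk to an $L_1$-type estimation error of a regression function, then analyze that error via the standard bias/variance decomposition for $k$-nearest-neighbor regression, and finally optimize in $k$.

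First I would rewrite both classifiers as plug-in rules. Under the stationary distribution $\pi$, define the regression function $\eta(x) = \Prob[(x,y)\sim\pi]{y=1 \mid x}$. The asymptotic classifier $C_0$ from Theorem~\ref{thm:knn} is (up to a normalizing factor) the Bayes rule with respect to $\eta$, i.e.\ $C_0(x) = \operatorname{sign}(2\eta(x)-1)$, and the $k$-NN classifier $\hat C$ is the plug-in rule based on the $k$-NN estimate $\hat\eta_n$ of $\eta$ built from $n$ augmented samples. Using the standard comparison inequality (Devroye--Györfi--Lugosi), the excess risk satisfies $r(\hat C) - r(C_0) \le 2\,\Exv[X\sim\pi]{|\hat\eta_n(X) - \eta(X)|}$. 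This already reduces the theorem to a bound on the mean absolute error of $k$-NN regression.

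Next I would perform the bias/variance decomposition of $\hat\eta_n(x)$. Writing $X_{(1)}(x), \dots, X_{(k)}(x)$ for the $k$ nearest augmented points to $x$, $\hat\eta_n(x) = \tfrac{1}{k}\sum_{j=1}^k Y_{(j)}$. The conditional-mean term is $\tfrac{1}{k}\sum_{j=1}^k \eta(X_{(j)}(x))$, so by $L$-Lipschitzness of $\eta$,
\[
  \bigl|\Exv{\hat\eta_n(x) \mid X_1,\dots,X_n} - \eta(x)\bigr|
  \le L \cdot \max_{j\le k} \norm{X_{(j)}(x) - x}.
\]
A standard covering argument, using the fact that the $n$ augmented samples eventually drawn from the ergodic chain approximate i.i.d.\ samples from $\pi$ on $\mathcal{X}\subset\R^d$, yields $\Exv{\max_{j\le k}\norm{X_{(j)}(x)-x}} \lesssim (k/n)^{1/d}$ on the support of $\pi$. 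The variance term is at most $1/(4k)$ since the $Y_{(j)}$ are Bernoulli. Combining, the pointwise mean-squared error is $O(L^2 (k/n)^{2/d} + 1/k)$, and integrating in $x$ and taking a square root gives
\[
  \Exv[X\sim\pi]{|\hat\eta_n(X)-\eta(X)|}
  \;\le\; O\!\left( L (k/n)^{1/d} + k^{-1/2} \right).
\]

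Finally I would balance the two terms by choosing $k \asymp (n/L)^{2/(2+d)}$, yielding
\[
  r(\hat C) - r(C_0) \;\le\; O\!\left(L^{d/(2+d)} n^{-1/(2+d)}\right),
\]
which is the claimed bound. The main obstacle is the non-i.i.d.\ nature of the augmented sample: it is drawn from the Markov chain of Definition~\ref{def:markov}, not directly from $\pi$. Handling this rigorously requires invoking the geometric ergodicity of the chain (the mixing rate $\beta/(\beta+1)$ noted in Section~\ref{sec:classification}) to show that, after a burn-in whose length is logarithmic in $n$, the effective sample behaves i.i.d.\ $\pi$ for the purposes of nearest-neighbor counting, so the standard covering estimates for $\max_j\norm{X_{(j)}(x)-x}$ still apply; all other steps are routine.
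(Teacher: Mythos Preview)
The paper does not actually give its own proof of this proposition: it simply cites it as a known result from \cite{gyorfi2006distribution} and \cite{tibshirani2018}, and then separately bounds the mixing time of the augmentation chain. Your sketch is precisely the classical Devroye--Gy\"orfi--Lugosi plug-in argument from those references (comparison inequality, bias/variance split of $k$-NN regression under a Lipschitz regression function, balancing $k$), so you are reproducing exactly the approach the paper invokes by citation; the non-i.i.d.\ issue you flag is the one substantive point the paper adds, and it addresses it the same way you suggest, via the geometric mixing rate $\beta/(\beta+1)$ of the chain.
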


Next we analyze the convergence of the Markov chain by computing its distribution at time $n$.
Define
\[
  \pi_n = \frac{\rho^\top}{\beta+1}\left( \frac{A^n}{(\beta+1)^{n-1}} + \sum_{i=0}^{n-1} \left(\frac{A}{\beta+1}\right)^i \right).
\]
We claim that $\pi_n$ is the distribution of the combined Markov augmentation process at time $n$.

Recall that $\rho^\top$ is a distribution over the orignial training data.
We naturally suppose that the initial example is drawn from this distribution, so that $\pi_0 = \rho^\top$.
Note that this matches the expression for $\pi_n$ at $n=0$. All that remains to show that this is the distribution of the Markov chain at time $n$ is to prove that $\pi_{n+1} = \pi_n R$.

From the relations $\rho^\top\mathbf{1} = 1$ and $A\mathbf{1} = \beta\mathbf{1}$, we have
\[
  \rho^\top A^i R = \rho^\top A^i \left(\frac{A + \mathbf{1}\rho^\top}{\beta+1}\right) = \rho^\top\frac{A^{i+1} + \beta^i \rho^\top}{\beta+1}
\]
for all $i \ge 0$.

Therefore
\begin{align*}
  \pi_{n}R &= \frac{\rho^\top}{\beta+1}\left( \frac{A^n}{(\beta+1)^{n-1}} + \sum_{i=0}^{n-1} \left(\frac{A}{\beta+1}\right)^i \right)R
  \\&= \frac{\rho^\top}{\beta+1}\left( \frac{A^{n+1} + \beta^n I}{(\beta+1)^n} + \sum_{i=0}^{n-1}\frac{A^{i+1} + \beta^i I}{(\beta+1)^{i+1}} \right)
  \\&= \frac{\rho^\top}{\beta+1}\left( \frac{A^{n+1}}{(\beta+1)^n} + \sum_{i=0}^{n-1}\frac{A^{i+1}}{(\beta+1)^{i+1}} + \frac{\beta^n I}{(\beta+1)^n} + \sum_{i=0}^{n-1}\frac{\beta^i I}{(\beta+1)^{i+1}} \right)
  \\&= \frac{\rho^\top}{\beta+1}\left( \frac{A^{n+1}}{(\beta+1)^n} + \sum_{i=1}^{n-1}\frac{A^{i}}{(\beta+1)^{i}} + \frac{\beta^n I}{(\beta+1)^n} 
  + \frac{I}{\beta+1}\frac{\left( 1-\left( \frac{\beta}{\beta+1} \right)^n \right)}{\left( 1- \frac{\beta}{\beta+1} \right)} \right)
  \\&= \frac{\rho^\top}{\beta+1}\left( \frac{A^{n+1}}{(\beta+1)^n} + \sum_{i=1}^{n-1}\frac{A^{i}}{(\beta+1)^{i}} + I \right)
  \\&= \pi_{n+1}.
\end{align*}

The difference from the stationary distribution is
\begin{align*}
  \pi_n - \pi &= \pi_n - \frac{\rho^\top}{\beta+1}\sum_{i=0}^\infty\left( \frac{A}{\beta+1} \right)^i \\
  &= \frac{\rho^\top}{\beta+1}\left( \frac{A^n}{(\beta+1)^{n+1}} - \sum_{i=n}^\infty \left( \frac{A}{\beta+1} \right)^i \right).
\end{align*}

The $\ell_2$ norm of this can be straightforwardly bounded, noting that $\left\| A \right\|_{op} \le \beta$.
\begin{align*}
  \left\| \pi_n - \pi \right\|_2
  &\le \frac{1}{\beta+1}\left( \frac{\beta^n}{(\beta+1)^{(n+1)}} + \sum_{i=n}^\infty \left(\frac{\beta}{\beta+1}\right)^{i} \right)
  \\&\le \frac{1}{\beta+1}\left( \frac{\beta^n}{(\beta+1)^{(n+1)}} + \frac{\beta^n}{(\beta+1)^{n-1}} \right)
  \\&= \left(\frac{\beta}{\beta+1}\right)^{n}\left( 1 + \frac{1}{(\beta+1)^{2}} \right).
\end{align*}
A bound on the total variation distance instead incurs an extra constant (in the dimension).
This shows that the augmentation chain mixes exponentially fast, i.e.\ takes $O((\beta+1)\log(1/\varepsilon)$ samples to converge to a desired error from the stationary distribution.

\section{Kernel Transformations and Special Cases}

\subsection{Updated Kernel for Modified Augmentations}
\label{subsec:update-kernel}
Our analysis of the kernel classifier in Lemma~\ref{lmm:kernel} yields a closed form in terms of the base augmentation matrices.
This allows us to modify any kernel by changing the augmentations, producing a new kernel.
For example, imagine that we start with a kernel $K$, which has corresponding augmentation operator $A$ such that
\[
  K = (I(\beta+1)-A)^{-1}.
\]
Suppose that we want to add an additional augmentation operator with stochastic transition matrix $\hat A$ and rate $\hat{\beta}$.
The resulting kernel is guaranteed to be a non-negative kernel by Lemma~\ref{lmm:kernel}, and it can be computed from the known $K$ by expanding
\begin{align*}
  (I(\hat{\beta}+\beta+1) - A - \hat{\beta}\hat A))^{-1}
  &=
  (K^{-1} + \hat{\beta} I - \hat{\beta} \hat{A})^{-1}
  \\&=
  \left( \left[ I + (I - \hat{A})\hat{\beta}K \right]K^{-1} \right)^{-1}
  \\&=
  K\left( I + (I - \hat{A})\hat{\beta}K \right)^{-1}
  \\&=
  K \sum_{n=0}^\infty \hat{\beta}^n ((\hat{A}-I)K)^n.
\end{align*}

\subsection{Kernel Matrix for the Jitter Augmentation}\label{subsec:noise-kernel}

In the context of Definition~\ref{def:markov}, consider performing a single augmentation $A_j$ corresponding to adding Gaussian noise to an input vector.
Although Definition~\ref{def:markov} uses an approximated finite sample space, for this simple case we consider the original space $\mathcal{X} = \R^d$.
The transition matrix $A_1$ is just the standard Gaussian kernel, $A_1(x,y) = (2\pi \sigma^2)^{-d/2}\exp(-\|x-y\|^2/(2\sigma^2))$.
With rate $\beta$, the kernel matrix by Lemma~\ref{lmm:kernel} is
\[
  K = \left(I(1+\beta) - \beta A\right)^{-1},
\]
where we think of $I$ as the identity operator on $\mathcal{X} \to \mathcal{X}$.

We define a $d$-dimensional Fourier Transform satisfying
\[
  \mathcal{F} \exp\left( -\frac{\left\| t \right\|^2}{2\sigma^2} \right)(\omega) = (\sigma^2)^{d/2} \exp\left( -\frac{\left\| \omega \right\|^2 \sigma^2}{2} \right).
\]
Note that this Fourier Transform is its own inverse on Gaussian densities.

Therefore
\[
  \mathcal{F} K = \left[1+\beta - \beta (2\pi)^{d/2} \exp\left( -\frac{\left\| \omega \right\|^2 \sigma^2}{2} \right) \right]^{-1}.
\]

To compute the inverse transform of this, consider the function
\[
  \frac{1}{\alpha - \beta \exp\left( -\left\| t \right\|^2\sigma^2/2 \right)} = \frac{1}{\alpha}\left[ \frac{1}{1 - \frac{\beta}{\alpha}\exp\left( -\left\| t \right\|^2\sigma^2/2 \right)} \right] = \frac{1}{\alpha}\left[ 1 + \sum_{i=1}^\infty \left( \frac{\beta}{\alpha} \right)^i \exp\left( \frac{-\left\| t \right\|^2\sigma^2}{2}i \right)\right]
\]
Applying the inverse Fourier Transform $\mathcal{F}^{-1}$, it becomes
\[
  \frac{1}{\alpha}\left[ \delta(\omega) + \sum_{i=1}^\infty \left( \frac{\beta}{\alpha} \right)^i \frac{1}{(\sigma^2 i)^{d/2}} \exp\left( \frac{-\left\| t \right\|^2}{2\sigma^2 i} \right) \right].
\]

Since the value of the kernel matrix only matters up to a constant, we can scale it by the first term.
We also ignore the $\delta(\omega)$ term, which in the context of Theorem~\ref{thm:knn} only serves to emphasize that a test point in the training set should be classified as its known true label.
Scaling by $\alpha(\alpha/\beta)\sigma^d$, we are left with
\begin{align*}
  & \sum_{i=1}^\infty \left( \frac{\beta}{\alpha} \right)^{i-1} \frac{1}{i^{d/2}} \exp\left( \frac{-\left\| t \right\|^2}{2\sigma^2 i} \right)
  \\=& \exp\left( \frac{-\left\| t \right\|^2}{2\sigma^2} \right) +
  \sum_{i=1}^\infty \left( \frac{\beta}{\alpha} \right)^{i} \frac{1}{(i+1)^{d/2}} \exp\left( \frac{-\left\| t \right\|^2}{2\sigma^2 (i+1)} \right).
\end{align*}

Finally, after plugging in the corresponding values for $\alpha$ and $\beta$, notice that $\beta$ is proportional to $(2\pi)^{-d/2}$, which causes the sum to be negligible.

\section{Additional Propositions for Section~\ref{sec:kernels}}\label{sec:proofs}

A function $f$ is \emph{$\alpha$-strongly convex} if for all $x$ and $x'$, $f(x') \geq
f(x) + \nabla f(x)^\top (x' - x) + (\alpha/2) \norm{x' - x}^2$; the function $f$ is
\emph{$\beta$-strongly smooth} if for all $x$ and $x'$, $f(x') \leq f(x) + \nabla f(x)^\top (x'
- x) + (\beta/2) \norm{x' - x}^2$.

If we assume that the loss is strongly convex and strongly smooth, then the
difference in objective functions $g(w)$ and $\hat{g}(w)$ can be bounded in
terms of the squared-norm of $w$, and then the minimizer of the approximate
objective $\hat{g}(w)$ is close to the minimizer of the true objective $g(w)$.
\begin{proposition}  \label{thm:first_order_approx}
  Assume that the loss function $l(x; y)$ is $\alpha$-strongly convex and $\beta$-strongly
  smooth with respect to $x$, and that
  \begin{align*}
    &a I \preceq \frac{1}{n} \sum_{i=1}^n \Cov[t_i \sim T(x_i)]{\phi(t_i)} \preceq b I, \quad \text{and} \\
    &\frac{1}{n} \sum_{i=1}^n \psi(x_i) \psi(x_i)^\top \succeq c I.
  \end{align*}
  Letting $w^* = \argmin g(w)$ and $\hat{w} = \argmin \hat{g}(w)$, then
  \begin{align*}
    &\frac{\alpha a}{2} \norm{w}^2 \leq g(w) - \hat{g}(w) \leq \frac{\beta b}{2} \norm{w}^2,
      \quad \text{and} \\
    & \norm{w^* - \hat{w}}^2 \leq \frac{\beta b}{\alpha c} \norm{\hat{w}}^2.
  \end{align*}
\end{proposition}

If $\alpha c \gg \beta b$ (that is, the covariance of $\phi(t_i)$ is small relative to the
square of its expected value), then $\frac{\beta b}{\alpha c} \ll 1$, and so
\[
  \norm{w^* - \hat{w}}^2
  \ll
  \norm{\hat{w}}^2.
\]
This means that minimizing the first-order approximate objective $\hat{g}$ will
provide a fairly accurate parameter estimate for the objective $g$ on the augmented
dataset.

\begin{proof}[Proof of Proposition~\ref{thm:first_order_approx}]
  By Taylor's theorem, for any random variable $X$ over $\R$, there exists some
  remainder function $\zeta: \R \rightarrow \R$ such that
  \begin{align*}
    \Exv{l(X; y)}
    &=
    \Exv{ l(\Exv{X}; ) + (X - \Exv{X}) l'(\Exv{X}; y) + \frac{1}{2} (X -
      \Exv{X})^2 l''(\zeta(X); y) }\\
    &=
    l(\Exv{X}; y) + \frac{1}{2} \Exv{ (X - \Exv{X})^2 l''(\zeta(X); y) }.
  \end{align*}
  The condition of $l(x; y)$ being $\alpha$-strongly convex and $\beta$-strongly smooth
  means that $\alpha \leq l''(x) \leq \beta$ for any $x$.
  Thus
  \[
    \frac{\alpha}{2} \Var{X}
    \le
    \Exv{l(X)} - l(\Exv{X})
    \le
    \frac{\beta}{2} \Var{X}.
  \]
  It follows that (letting our random variable $X$ be $w^\top \phi(t_i)$),
  \[
    \frac{\alpha}{2} \cdot \frac{1}{n} \sum_{i=1}^n \Var[t_i \sim T(x_i)]{w^\top \phi(t_i)}
    \le
    g(w) - \hat g(w)
    \le
    \frac{\beta}{2} \cdot \frac{1}{n} \sum_{i=1}^n \Var[t_i \sim T(x_i)]{w^\top \phi(t_i)}.
  \]
  Because of the assumption that $a I \preceq \frac{1}{n} \sum_{i=1}^n \Cov[t_i \sim T(x_i)]{\phi(t_i)} \preceq b I$,
  \[
    \frac{\alpha a}{2} \norm{w}^2
    \le
    g(w) - \hat g(w)
    \le
    \frac{\beta b}{2} \norm{w}^2.
  \]

  We can bound the second derivative of $\hat{g}(w)$:
  \begin{align*}
    \nabla^2 \hat g(w)
    =
    \frac{1}{n} \sum_{i=1}^n \psi(x_i) \psi(x_i)^\top l''\left( w^\top \psi(x_i); y_i \right)
    \succeq
    \frac{\alpha}{n} \sum_{i=1}^n \psi(x_i) \psi(x_i)^\top
    \succeq
    \alpha c,
  \end{align*}
  where we have used the assumption that $\frac{1}{n} \sum_{i=1}^n \psi(x_i) \psi(x_i)^\top \succeq c I$.
  Thus $\hat g$ is $(\alpha c)$-strongly convex.

  We bound $\hat{g}(w^*) - \hat{g}(\hat{w})$:
  \begin{align*}
    \hat{g}(w^*) - \hat{g}(\hat{w})
    = \hat{g}(w^*) - g(w^*) + g(w^*) - g(\hat{w}) + g(\hat{w}) - \hat{g}(\hat{w})
    \le 0 + 0 + \frac{\beta b}{2} \norm{\hat{w}}^2,
  \end{align*}
  where we have used the fact that $\hat{g}(w) \leq g(w)$ for all $w$ and that
  $w^*$ minimizes $g(w)$.
  But $\hat{g}$ is $(\alpha c)$-strongly convex, so $\hat{g}(w^*) - \hat{g}(\hat{w})
  \geq \alpha c/ 2 \norm{w^* - \hat{w}}^2$.
  Combining these inequalities yields
  \[
    \norm{w^* - \hat{w}}^2
    \le
    \frac{\beta b}{\alpha c} \norm{\hat{w}}^2.
  \]
\end{proof}

\section{Variance Regularization Terms for Common Loss Functions}
\label{sec:variance_regularization_term}

Here we derive the variance regularization term for common loss functions such
as logistic regression and multinomial logistic regression.

For logistic regression,
\begin{align*}
  l(x; y)
  &=
  \log(1 + \exp(-y x)) \\
  &=
  -\frac{y x}{2} + \log\left(\exp\left(\frac{y x}{2}\right) + \exp\left(-\frac{y x}{2}\right) \right) \\
  &=
  -\frac{y x}{2} + \log\left( 2 \cosh\left( \frac{y x}{2} \right) \right) \\
  &=
  -\frac{y x}{2} + \log 2 + \log \cosh\left( \frac{y x}{2} \right).
\end{align*}
And so
\begin{align*}
  l'(x; y)
  &=
  -\frac{y}{2} + \frac{\sinh\left( \frac{y x}{2} \right)}{\cosh\left( \frac{y x}{2} \right)} \cdot \frac{y}{2} \\
  &=
  -\frac{y}{2} + \frac{y}{2} \tanh\left( \frac{y x}{2} \right)
\end{align*}
and
\begin{align*}
  l''(x; y)
  &=
  \frac{y^2}{4} \sech^2\left( \frac{y x}{2} \right) \\
  &=
  \frac{1}{4} \sech^2\left( \frac{x}{2} \right),
\end{align*}
since $y \in \{-1, 1\}$ and so $y^2 = 1$.
Therefore,
\begin{align*}
  g(w) - \hat g(w)
  =
  w^\top \left(
    \frac{1}{4 n} \sum_{i=1}^n \Exv[t_i \sim T(x_i)]{(\phi(t_i) - \psi(x_i))(\phi(t_i) - \psi(x_i))^\top \sech^2\left( \frac{1}{2} \zeta_i(w^\top \phi(t_i)) \right)}
  \right) w.
\end{align*}
To second order, this is
\begin{align*}
  g(w) - \hat g(w)
  &\approx
  w^\top \left(
    \frac{1}{4 n} \sum_{i=1}^n \Exv[t_i \sim T(x_i)]{(\phi(t_i) - \psi(x_i))(\phi(t_i) - \psi(x_i))^\top \sech^2\left( \frac{1}{2} (w^\top \psi(x_i)) \right)}
  \right) w \\
  &=
  w^\top \left(
    \frac{1}{4 n} \sum_{i=1}^n \Cov[t_i \sim T(x_i)]{\phi(t_i) \sech\left( \frac{1}{2} w^\top \psi(x_i) \right)}
  \right) w.
\end{align*}

For multinomial logistic regression, we use the cross entropy loss.
With the softmax probability $p_i = \frac{\exp(x_i)}{\sum \exp(x_j)}$,
\begin{equation*}
  l(x; y) = -(x_y - \log \sum \exp(x_j)).
\end{equation*}
The first derivative is:
\begin{equation*}
  \nabla l(x; y) = \frac{\exp(x_i)}{\sum \exp(x_j)} - \mathbf{1}_{i = y} = p - \mathbf{1}_{i=y}.
\end{equation*}
The second derivative is:
\begin{equation*}
  \nabla^2 l(x; y) = \Diag{p} - p p^T,
\end{equation*}
which does not depend on y.

\section{Additional Experiment Details and Results}
\label{sec:extraexps}

\captionsetup[sub]{font={8pt,sf}}
\begin{figure*}[t!]
\rotatebox[origin=l]{90}{\makebox[.15in]{\scriptsize{\textsf{rotation}}}}\hspace{1mm}
  \centering
  \begin{subfigure}{0.24\linewidth}
    \centering
    \includegraphics[width=\textwidth]{new-figs/mnist/objective_difference_kernel_rotation.pdf}
    \capspace
    \caption{RBF: Objective}
    \label{fig:kernel_rotation}
  \end{subfigure}\hfill     \begin{subfigure}{0.24\linewidth}
    \centering
    \includegraphics[width=\textwidth]{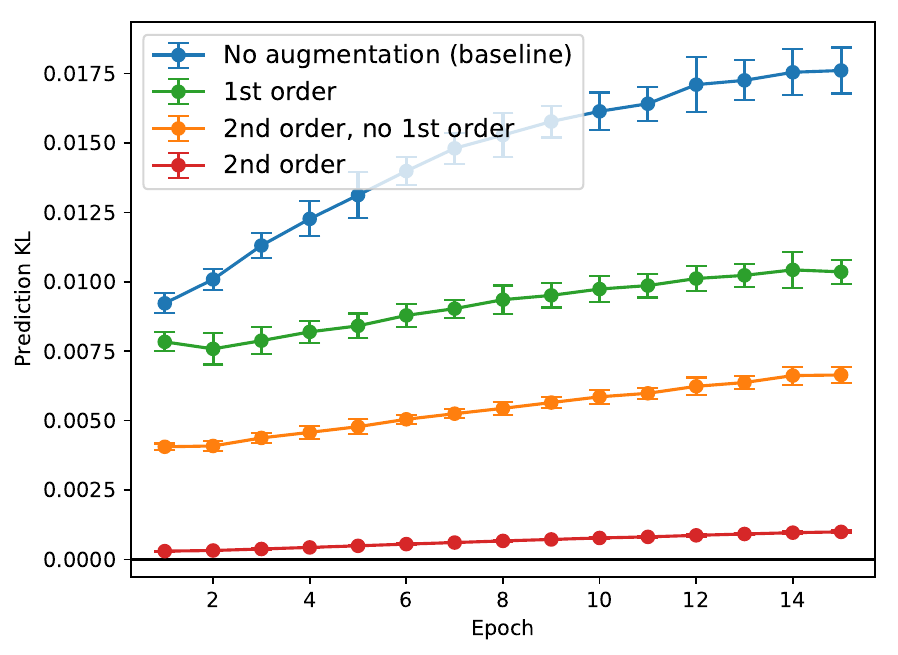}
   \capspace
    \caption{RBF: Prediction KL}
    \label{fig:kernel_all}
  \end{subfigure}\hfill  \begin{subfigure}{0.24\linewidth}
    \centering
    \includegraphics[width=\textwidth]{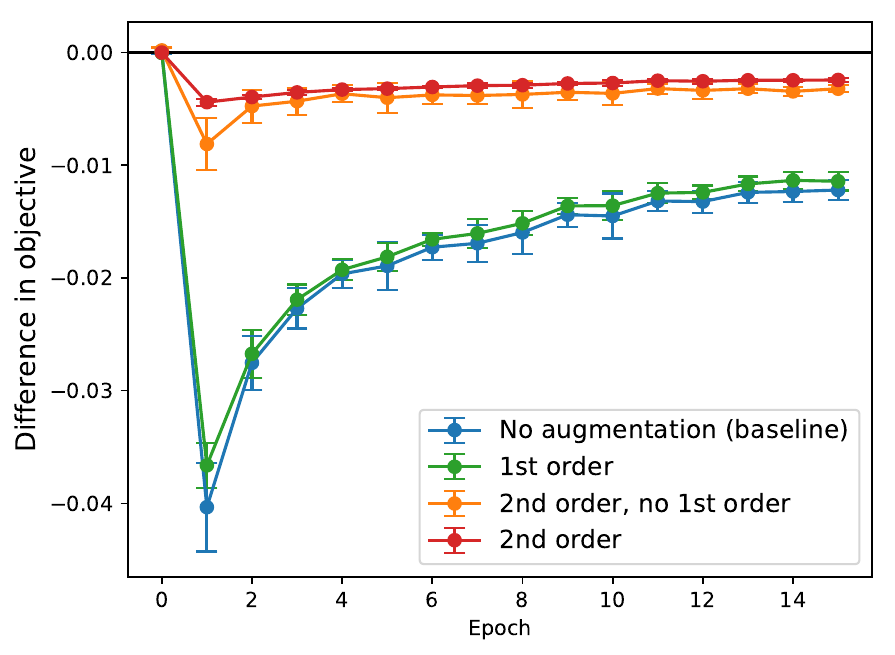}
    \capspace
    \caption{LeNet: Objective}
    \label{fig:kernel_crop}
  \end{subfigure}\hfill  \begin{subfigure}{0.24\linewidth}
    \centering
    \includegraphics[width=\textwidth]{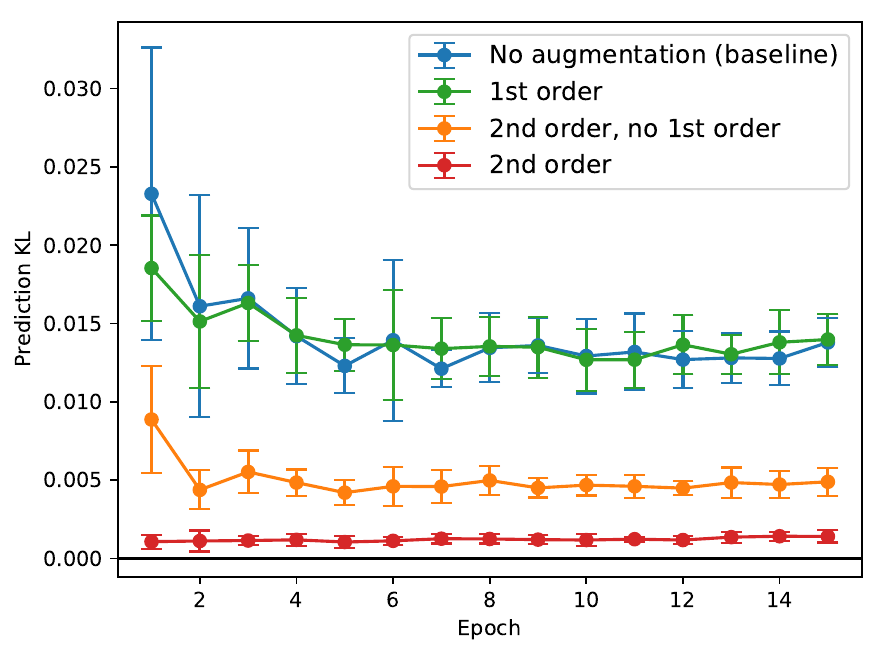}
   \capspace
    \caption{LeNet: Prediction KL}
    \label{fig:kernel_blur}
  \end{subfigure} \\
  \rotatebox[origin=l]{90}{\makebox[.2in]{\scriptsize{\textsf{blur}}}}  \hspace{1mm}
  \begin{subfigure}{0.24\linewidth}
    \centering
    \includegraphics[width=\textwidth]{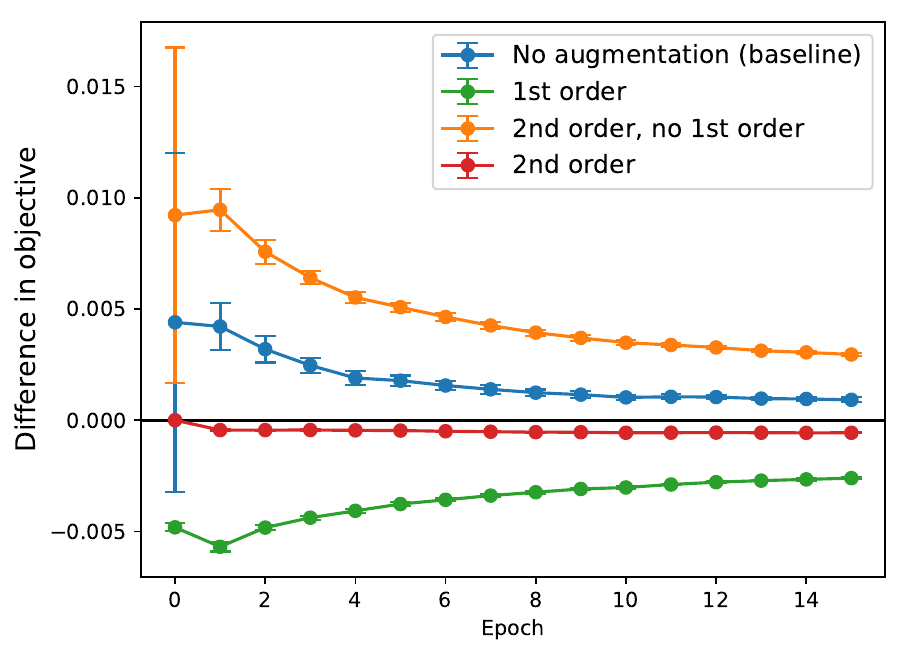}
   \capspace
    \caption{RBF: Objective}
    \label{fig:kl_kernel}
  \end{subfigure}\hfill    \begin{subfigure}{0.24\linewidth}
    \centering
    \includegraphics[width=\textwidth]{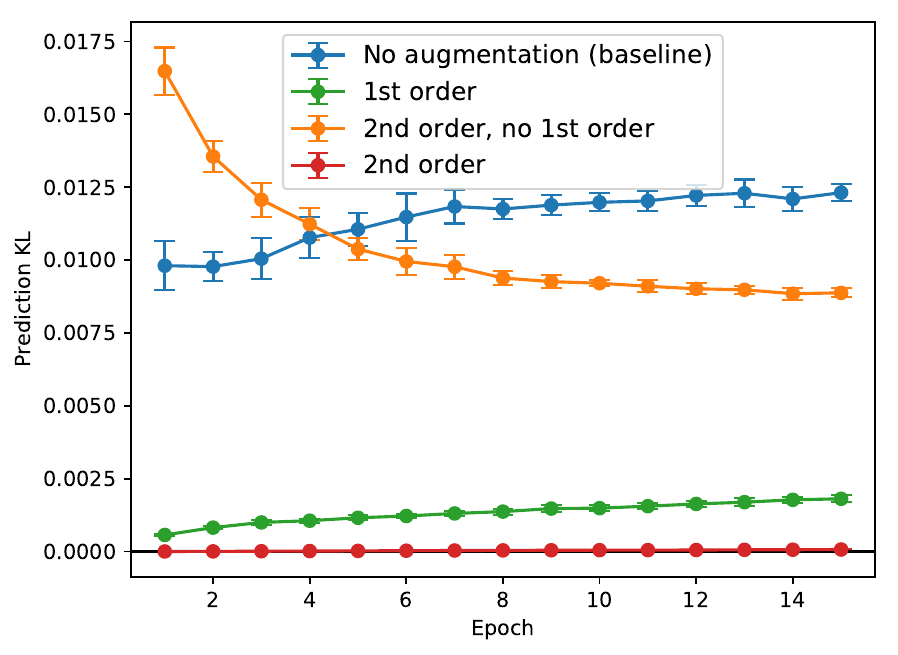}
   \capspace
    \caption{RBF: Prediction KL}
    \label{fig:kl_kernel}
  \end{subfigure}\hfill  \begin{subfigure}{0.24\linewidth}
    \centering
    \includegraphics[width=\textwidth]{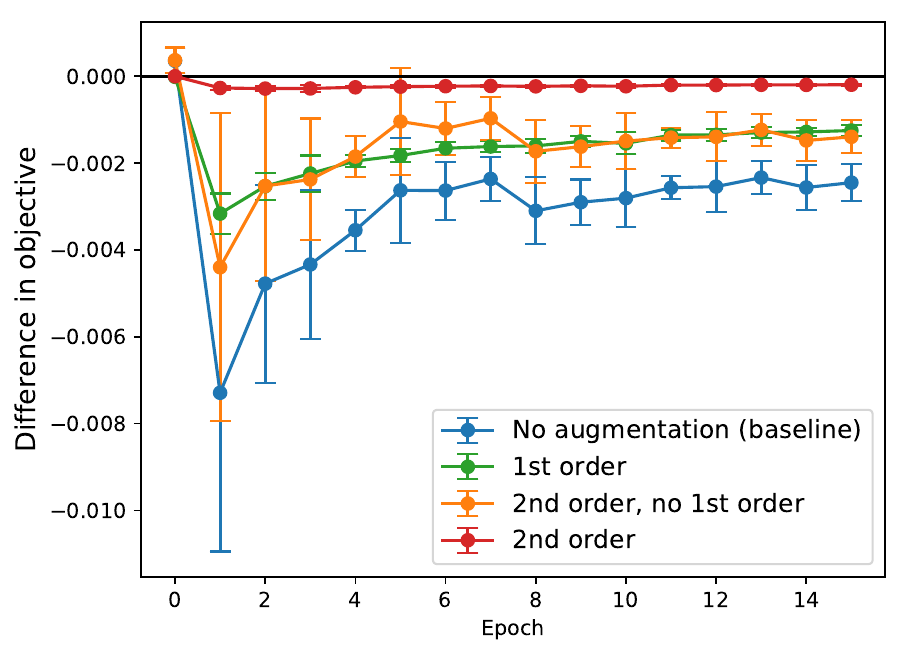}
  \capspace
    \caption{LeNet: Objective}
    \label{fig:kl_kernel}
  \end{subfigure}\hfill  \begin{subfigure}{0.24\linewidth}
    \centering
    \includegraphics[width=\textwidth]{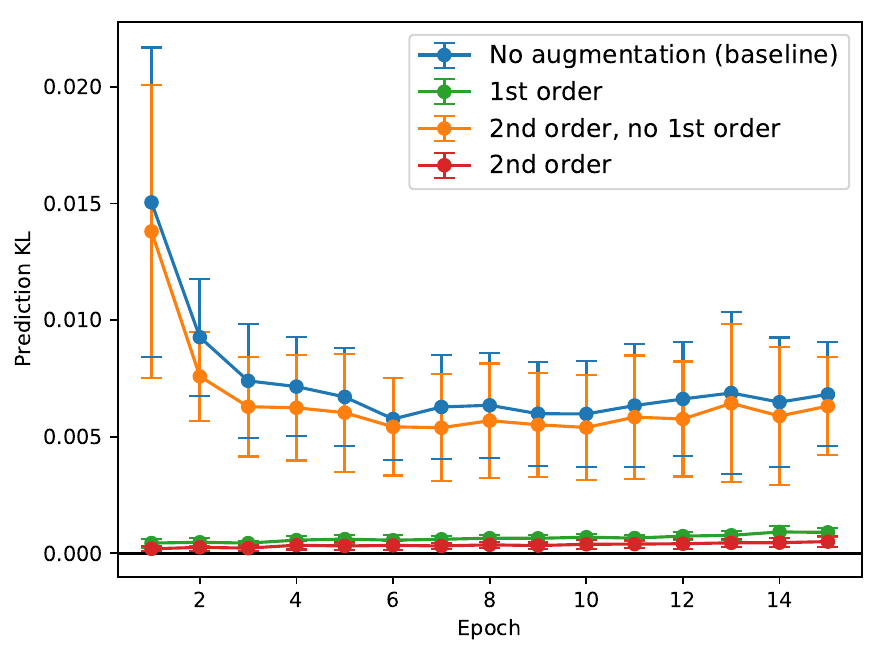}
   \capspace
    \caption{LeNet: Prediction KL}
    \label{fig:kl_kernel}
  \end{subfigure} \\
    \rotatebox[origin=l]{90}{\makebox[.25in]{\scriptsize{\textsf{crop}}}}  \hspace{1mm}
  \begin{subfigure}{0.24\linewidth}
    \centering
    \includegraphics[width=\textwidth]{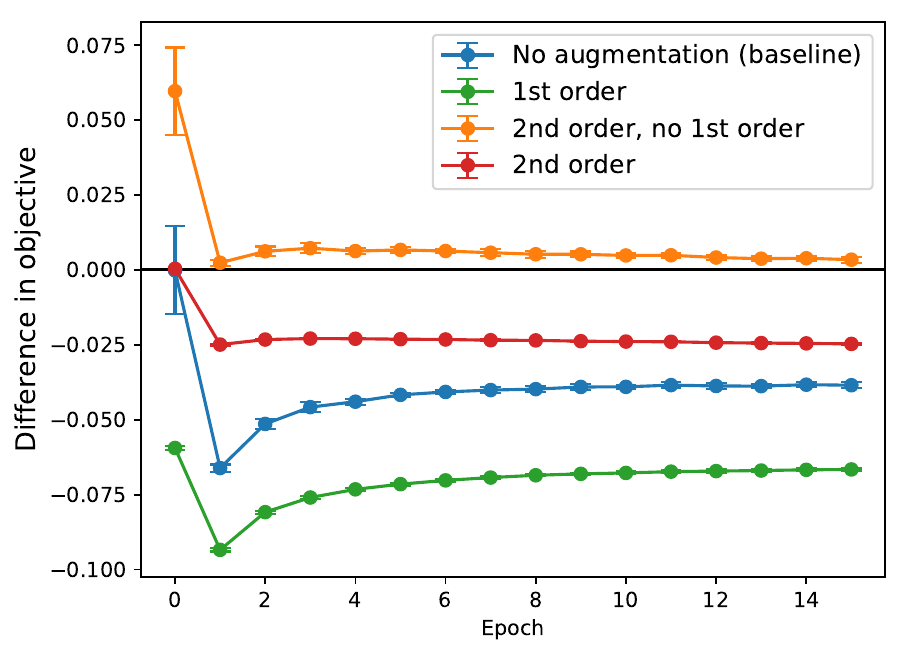}
   \capspace
    \caption{RBF: Objective}
    \label{fig:kl_kernel}
  \end{subfigure}\hfill    \begin{subfigure}{0.24\linewidth}
    \centering
    \includegraphics[width=\textwidth]{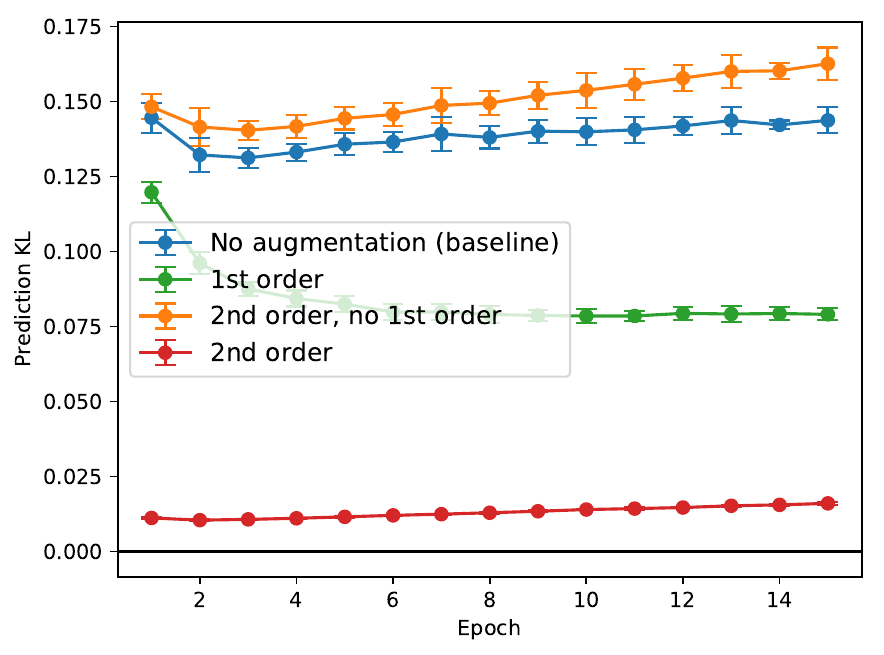}
   \capspace
    \caption{RBF: Prediction KL}
    \label{fig:kl_kernel}
  \end{subfigure}\hfill  \begin{subfigure}{0.24\linewidth}
    \centering
    \includegraphics[width=\textwidth]{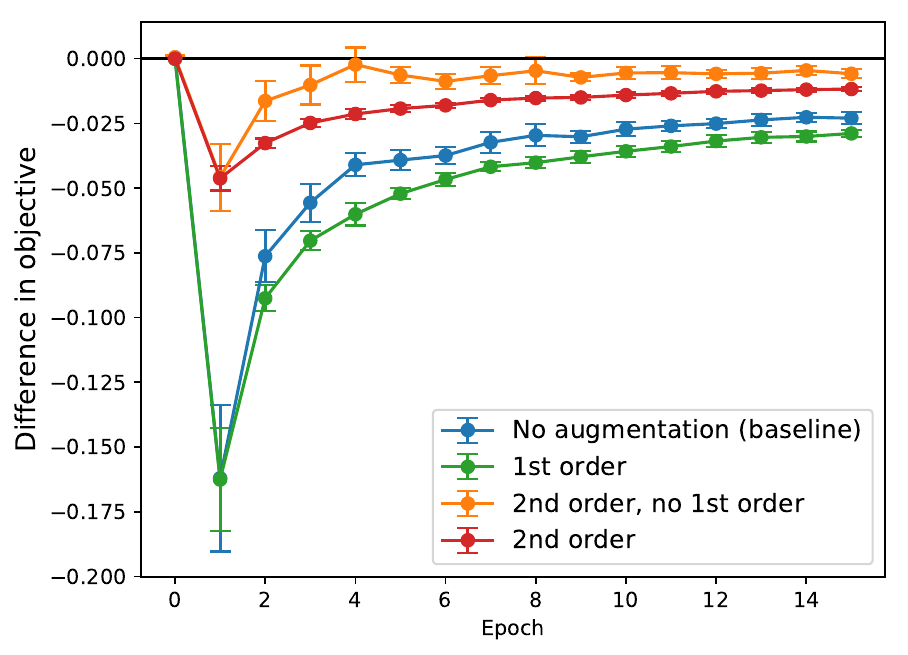}
  \capspace
    \caption{LeNet: Objective}
    \label{fig:kl_kernel}
  \end{subfigure}\hfill  \begin{subfigure}{0.24\linewidth}
    \centering
    \includegraphics[width=\textwidth]{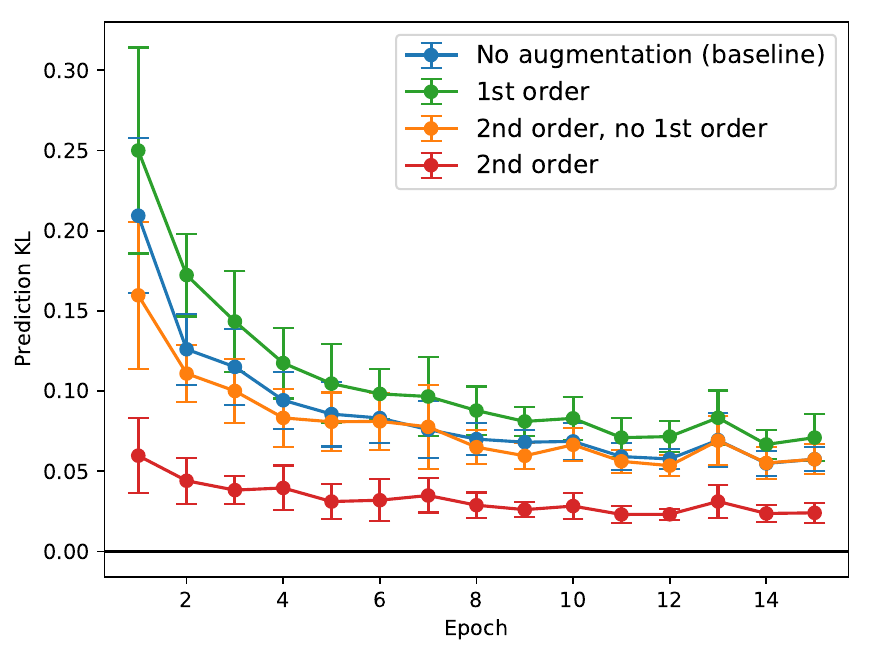}
   \capspace
    \caption{LeNet: Prediction KL}
    \label{fig:kl_kernel}
  \end{subfigure}\hfill  \caption{\textbf{MNIST Dataset.} The difference in objective value (a,e,i,c,g,k) and prediction distribution (b,f,j,d,h,l) (as measured via the KL divergence) between approximate and true objectives. In all plots, the second-order approximation tends to closely match the true objective, and to be closer than the first-order approximation or second-order component alone.
           }
  \label{fig:mnist_approx}
\end{figure*}

\captionsetup[sub]{font={8pt,sf}}
\begin{figure*}[t!]
\rotatebox[origin=l]{90}{\makebox[.15in]{\scriptsize{\textsf{rotation}}}}\hspace{1mm}
  \centering
  \begin{subfigure}{0.24\linewidth}
    \centering
    \includegraphics[width=\textwidth]{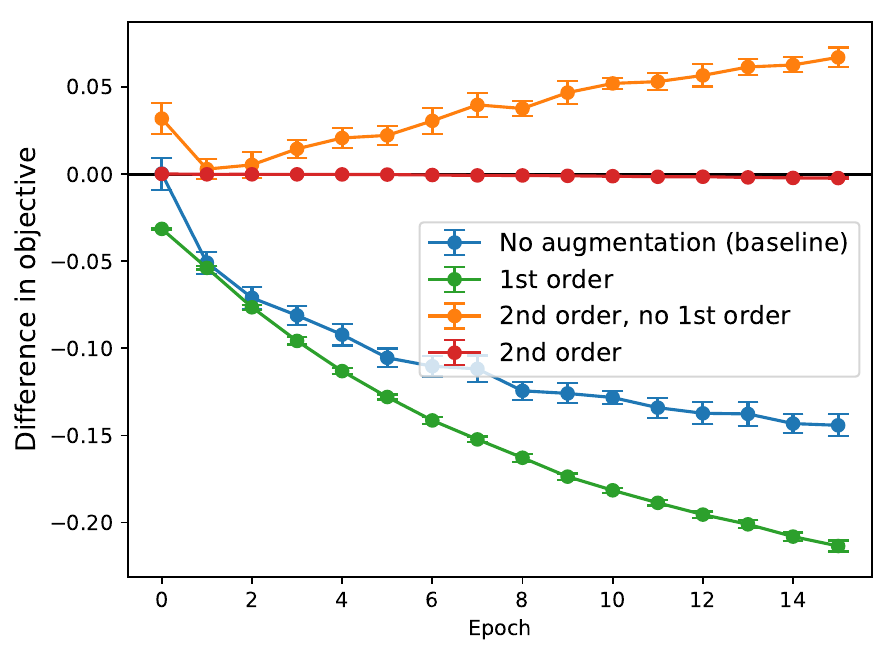}
    \capspace
    \caption{RBF: Objective}
    \label{fig:kernel_rotation}
  \end{subfigure}\hfill     \begin{subfigure}{0.24\linewidth}
    \centering
    \includegraphics[width=\textwidth]{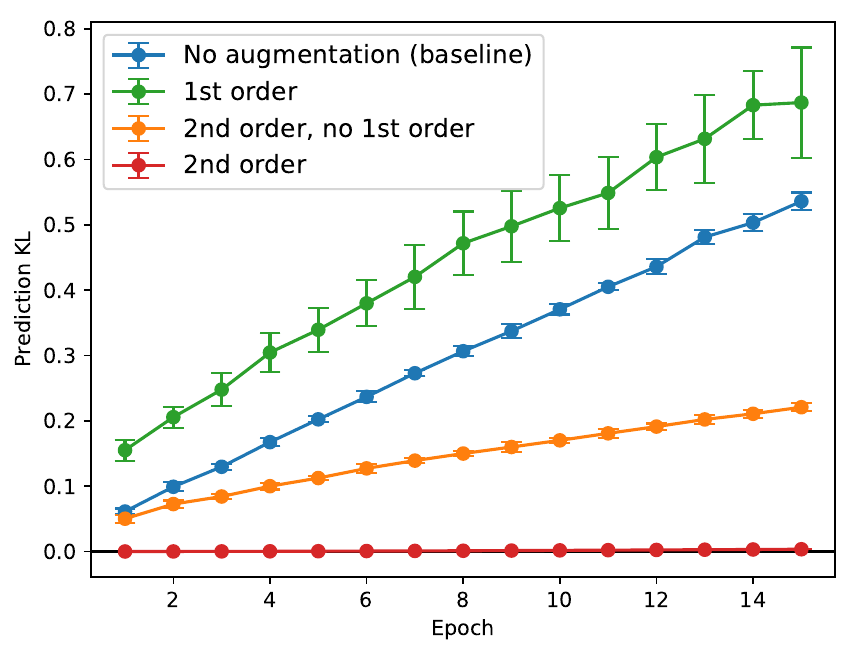}
   \capspace
    \caption{RBF: Prediction KL}
    \label{fig:kernel_all}
  \end{subfigure}\hfill  \begin{subfigure}{0.24\linewidth}
    \centering
    \includegraphics[width=\textwidth]{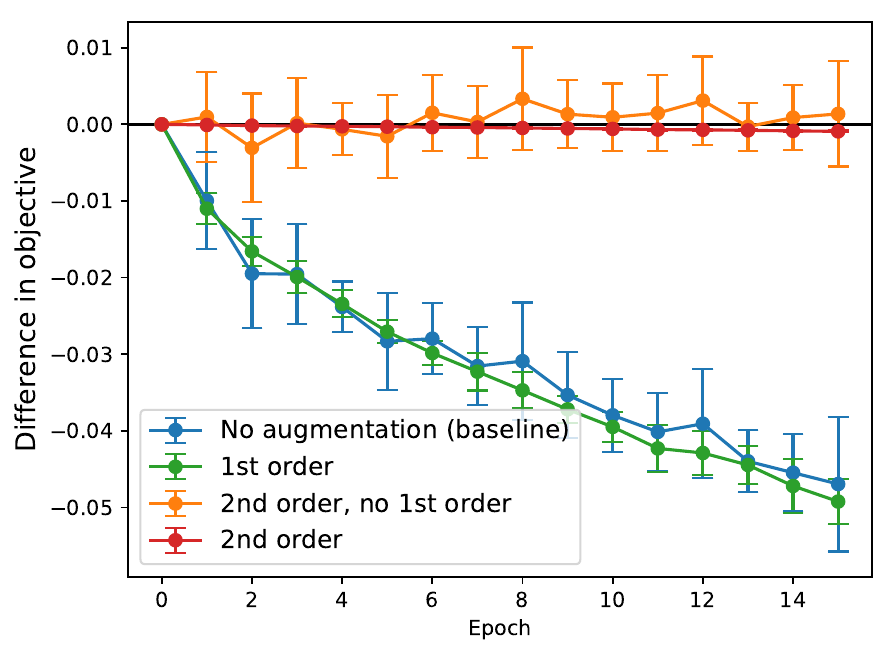}
    \capspace
    \caption{LeNet: Objective}
    \label{fig:kernel_crop}
  \end{subfigure}\hfill  \begin{subfigure}{0.24\linewidth}
    \centering
    \includegraphics[width=\textwidth]{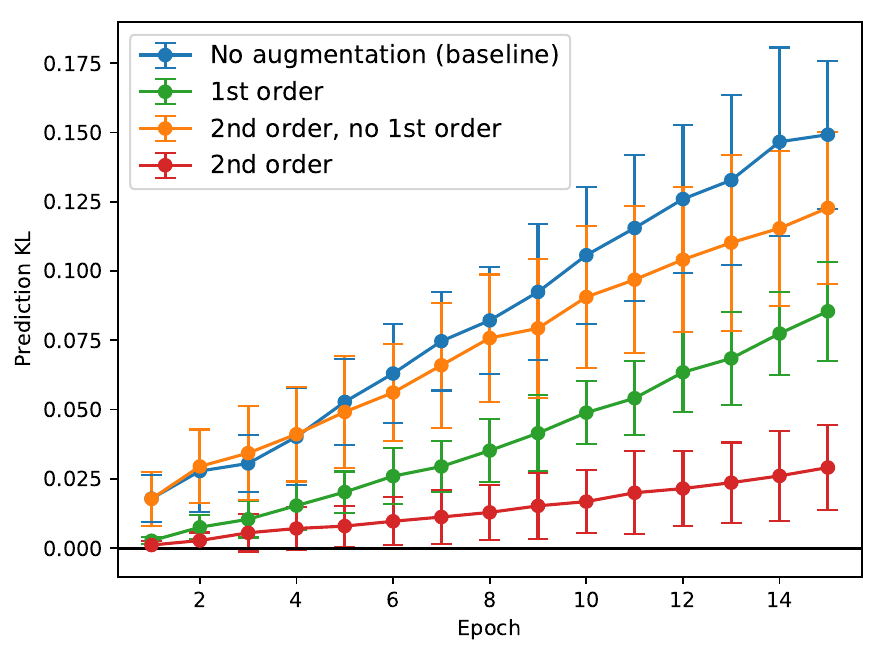}
   \capspace
    \caption{LeNet: Prediction KL}
    \label{fig:kernel_blur}
  \end{subfigure} \\
  \rotatebox[origin=l]{90}{\makebox[.2in]{\scriptsize{\textsf{blur}}}}  \hspace{1mm}
  \begin{subfigure}{0.24\linewidth}
    \centering
    \includegraphics[width=\textwidth]{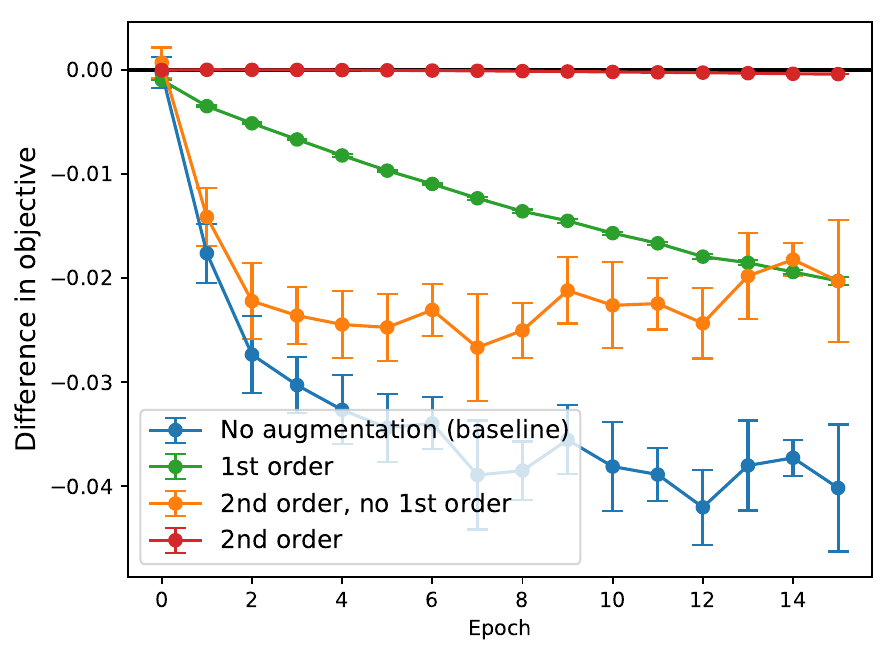}
   \capspace
    \caption{RBF: Objective}
    \label{fig:kl_kernel}
  \end{subfigure}\hfill    \begin{subfigure}{0.24\linewidth}
    \centering
    \includegraphics[width=\textwidth]{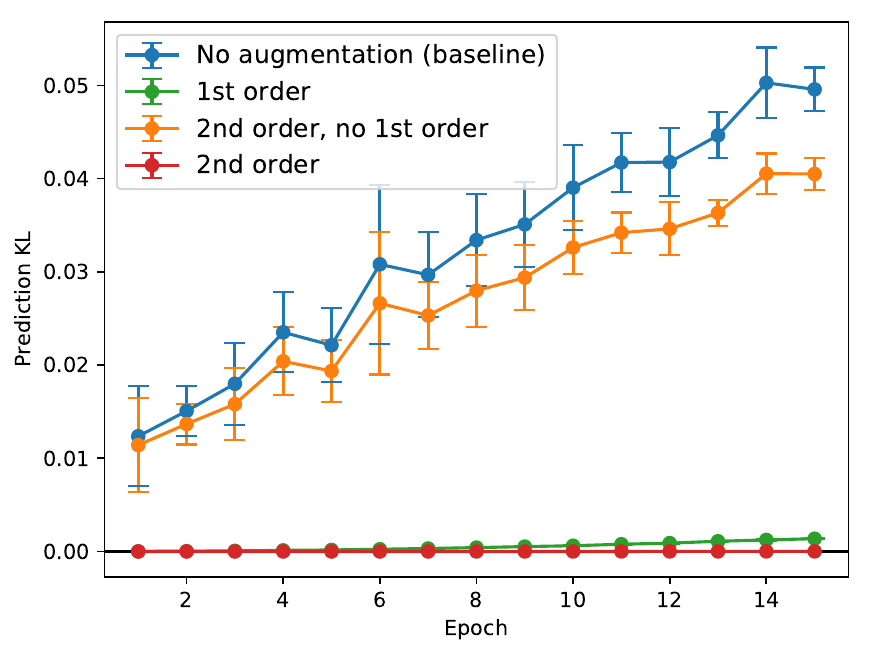}
   \capspace
    \caption{RBF: Prediction KL}
    \label{fig:kl_kernel}
  \end{subfigure}\hfill  \begin{subfigure}{0.24\linewidth}
    \centering
    \includegraphics[width=\textwidth]{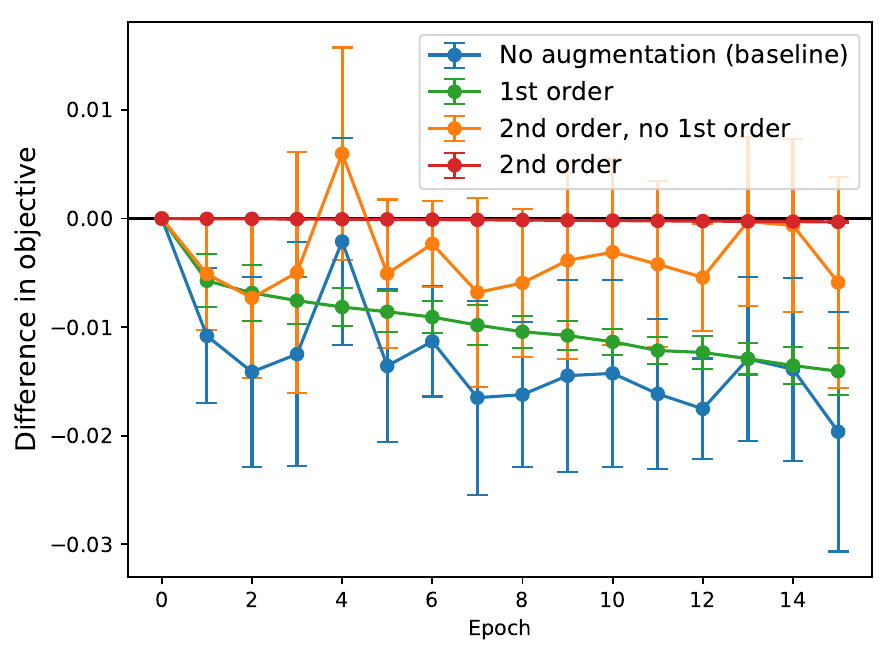}
  \capspace
    \caption{LeNet: Objective}
    \label{fig:kl_kernel}
  \end{subfigure}\hfill  \begin{subfigure}{0.24\linewidth}
    \centering
    \includegraphics[width=\textwidth]{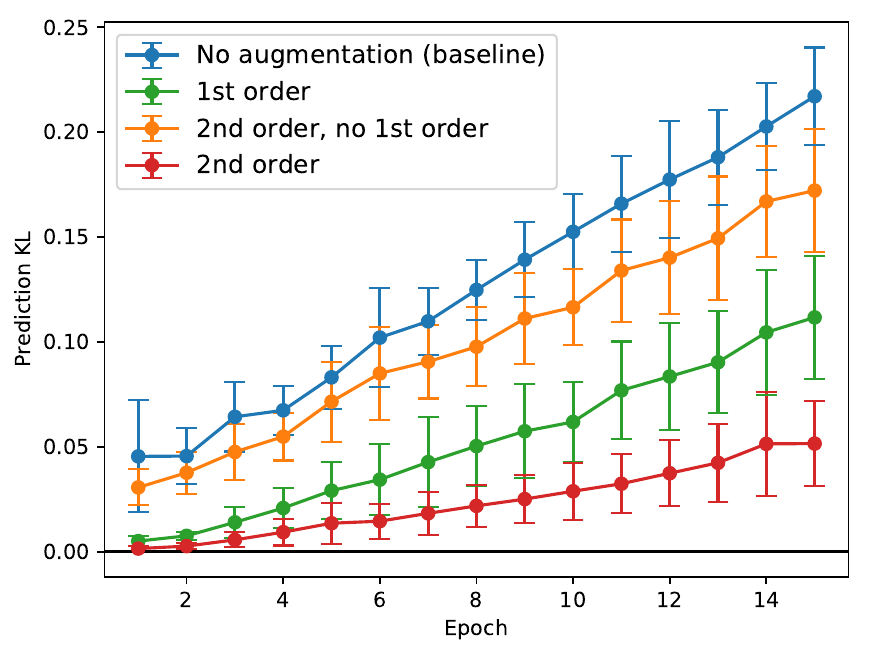}
   \capspace
    \caption{LeNet: Prediction KL}
    \label{fig:kl_kernel}
  \end{subfigure} \\
    \rotatebox[origin=l]{90}{\makebox[.25in]{\scriptsize{\textsf{crop}}}}  \hspace{1mm}
  \begin{subfigure}{0.24\linewidth}
    \centering
    \includegraphics[width=\textwidth]{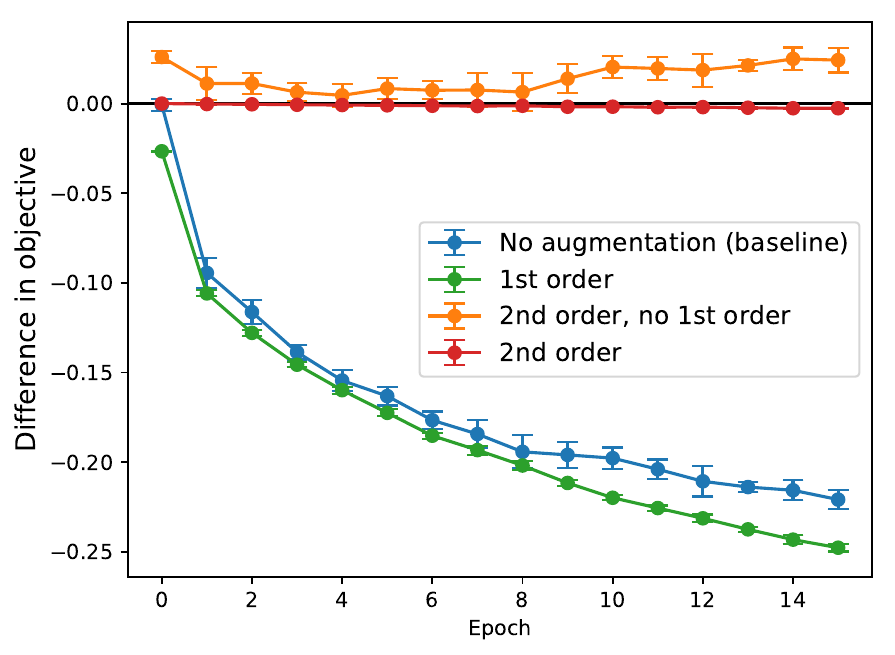}
   \capspace
    \caption{RBF: Objective}
    \label{fig:kl_kernel}
  \end{subfigure}\hfill    \begin{subfigure}{0.24\linewidth}
    \centering
    \includegraphics[width=\textwidth]{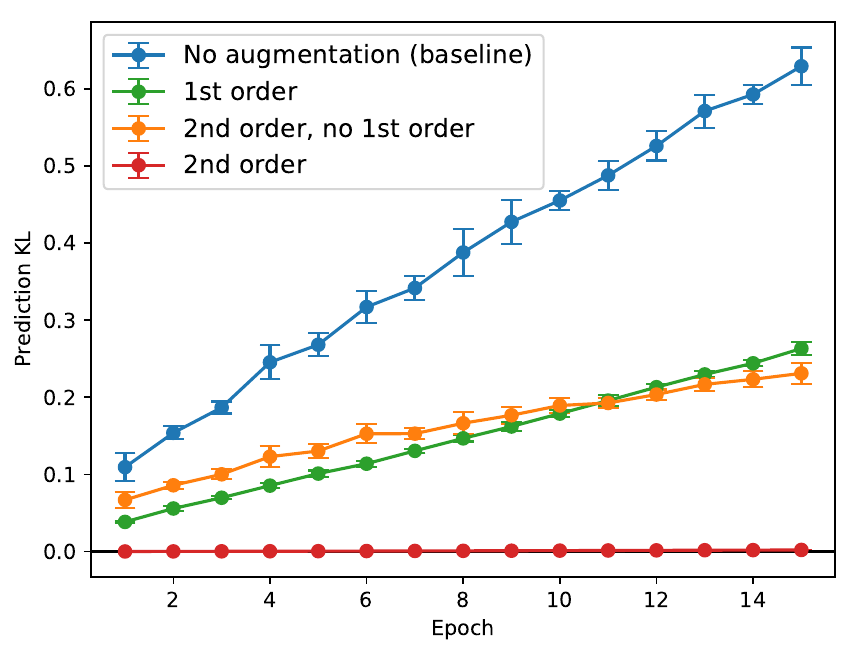}
   \capspace
    \caption{RBF: Prediction KL}
    \label{fig:kl_kernel}
  \end{subfigure}\hfill  \begin{subfigure}{0.24\linewidth}
    \centering
    \includegraphics[width=\textwidth]{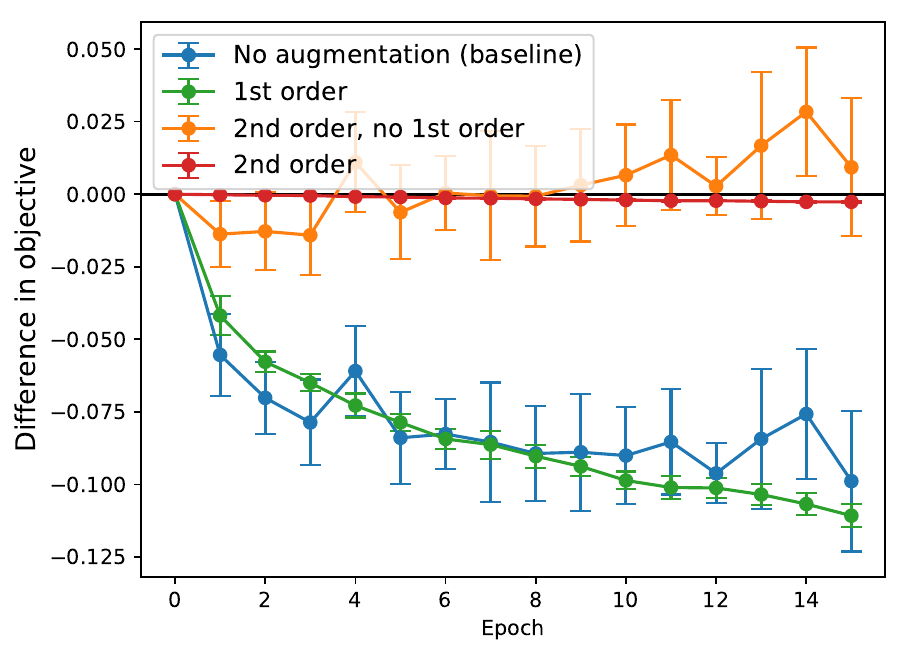}
  \capspace
    \caption{LeNet: Objective}
    \label{fig:kl_kernel}
  \end{subfigure}\hfill  \begin{subfigure}{0.24\linewidth}
    \centering
    \includegraphics[width=\textwidth]{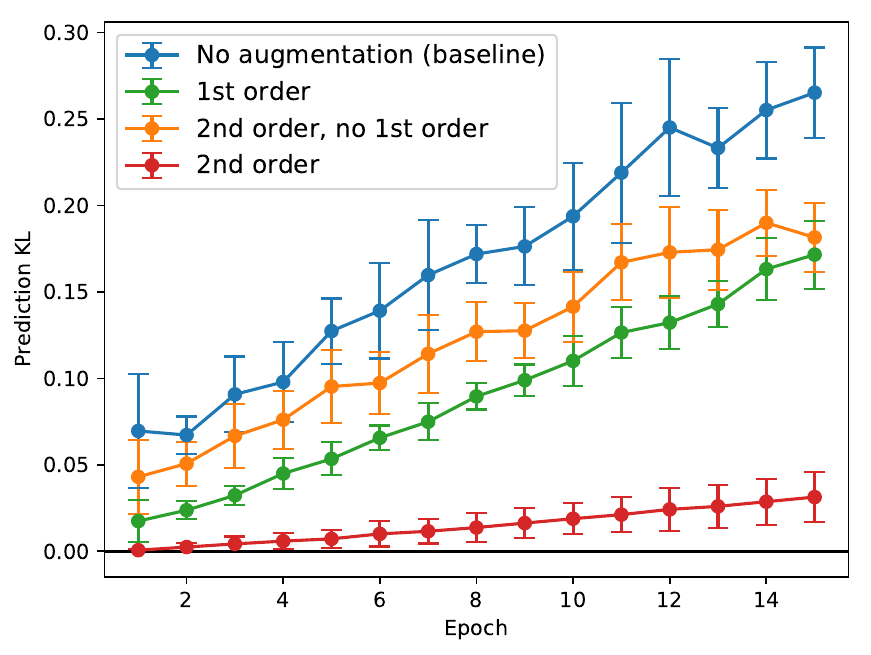}
   \capspace
    \caption{LeNet: Prediction KL}
    \label{fig:kl_kernel}
  \end{subfigure}\hfill  \caption{\textbf{CIFAR-10 Dataset}. The difference in objective value (a,e,i,c,g,k) and prediction distribution (b,f,j,d,h,l) (as measured via the KL divergence) between approximate and true objectives. In all plots, the second-order approximation tends to closely match the true objective, and to be closer than the first-order approximation or second-order component alone.
           }
  \label{fig:cifar_approx}
\end{figure*}

\subsection{First- and Second-order Approximations}
\label{subsec:first_second_approx_details}
For all experiments, we use the MNIST and CIFAR-10 dataset and test three
representative augmentations: rotations between $-15$ and $15$ degrees, random
crops of up to 64\% of the image area, and Gaussian blur.
We explore our approximation for kernel classifier models, using either an RBF
kernel with 10000 random Fourier features~\citep{rahimi2007random} or a learned
LeNet neural network~\citep{lecun1998gradient} as our base feature map.
The RBF kernel bandwidth is chosen by computing the kernel alignment, as in
Section~\ref{subsec:kernel_alignment}.
We optimize the models using stochastic gradient descent, with learning rate
$0.01$, momentum $0.9$, batch size 256, running for 15 epochs.
We explicitly transform the images and add them to the dataset.

We validate two claims: (i) the approximate objectives are close to the true
objective, and (ii) training on approximate objectives give similar models to
training on the true objective.

We plot the mean and standard deviation over 10 runs in
Figure~\ref{fig:mnist_approx} and Figure~\ref{fig:cifar_approx}.
For claim (i), we plot the objective difference, throughout the process of
training a model on the true objective, in Figure~\ref{fig:mnist_approx} (a, c,
e, g, i, k) and Figure~\ref{fig:cifar_approx} (a, c, e, g, i, k).
Objective difference closer to 0 is better.
The second-order approximation is better than the first-order and the
second-order without first-order approximation.
For claim (ii), we train 5 models, each on different objectives: true objective,
first-order approximation, second-order approximation, second-order without
first-order approximation, and no augmentation objective.
We measure the KL divergence between the predictions given by the approximate
models and the predictions given by the model with true objective, as they are
trained.
Lower KL divergence means the prediction distributions of the approximate model
is more similar to the predictions made by the true model.
Figure~\ref{fig:mnist_approx} (b, d, f, h, j, l) and
Figure~\ref{fig:cifar_approx} (b, d, f, h, j, l) show that the approximate
models trained on first-order approximation and second-order approximation yield
similar predictions to the model trained on the true objective, with the
second-order approximate model being particularly close to the true model.

\subsection{Kernel Alignment}
\label{subsec:kernel_alignment_details}

For the experiment in Section~\ref{subsec:kernel_alignment}, we use the same RBF
kernel with 10000 random Fourier features and LeNet, as in
Section~\ref{subsec:first_second_approx_details}.
We compute the kernel target alignment by collecting statistics from
mini-batches of the dataset, iterating over the dataset 50 times.
For the MNIST dataset, we consider rotation (between $-15$ and 15 degrees),
Gaussian blur, horizontal flip, horizontal \& vertical flip, brightness
adjustment (from 0.75 to 1.25 brightness factor), contrast adjustment (from 0.65
to 1.35 contrast factor).
For the CIFAR-10 dataset, we consider rotation (between $-5$ and 5 degrees),
Gaussian blur, horizontal flip, horizontal \& vertical flip, brightness
adjustment (from 0.75 to 1.25 brightness factor), contrast adjustment (from 0.65
to 1.35 contrast factor).
Accuracy on the validation set is obtained from SGD training over 15 epochs, with
the same hyperparameters as in Section~\ref{subsec:first_second_approx_details},
averaged over 10 trials.

\subsection{Augmented Random Fourier Features}
\label{subsec:augmented_rff_details}

We use two standard image classification datasets MNIST and CIFAR-10, and a
real-world mammography tumor-classification dataset called Digital Database for
Screening Mammography (DDSM)~\citep{heath2000digital, clark2013cancer,
  lee2016curated}.
DDSM comprises 1506 labeled mammograms, to be classified as benign versus
malignant tumors.
The DDSM images are gray-scale and of size 224x224, which we resize to 64x64.

We use the same RBF kernels with 10000 random Fourier features as in
Section~\ref{subsec:first_second_approx_details}.
We apply rotations between $-15$ and 15 degrees for MNIST and CIFAR-10, and
rotations between 0 and 360 degrees for DDSM, since the tumor images are
rotationally invariant.
We sample $s = 16$ angles to construct the augmented random Fourier feature map
for MNIST and CIFAR, and $s = 36$ angles for DDSM.

To make the MNIST and CIFAR classification tasks more challenging, we also
rotate the images in the test set between $-15$ and $15$ degrees for MNIST, and
between $-5$ and $5$ degrees for CIFAR-10.

\subsection{Feature Averaging for Deep Learning}
\label{app:featavg}

In addition to the accuracy results in Section~\ref{sec:featavg}, we also explore the effect of feature averaging on prediction disagreement throughout training. We plot the difference in generalization between approximation and true
objectives for LeNet in Figure~\ref{fig:layers_appendix}, for rotation between
$-15$ and $15$ degrees. We again observe that approximation at earlier layers saves computation but can reduce the fidelity of the approximation.

\begin{figure}[h!] \centering
  \begin{subfigure}{0.3\linewidth}
    \centering
    \vspace{.7em}
    \includegraphics[width=\linewidth]{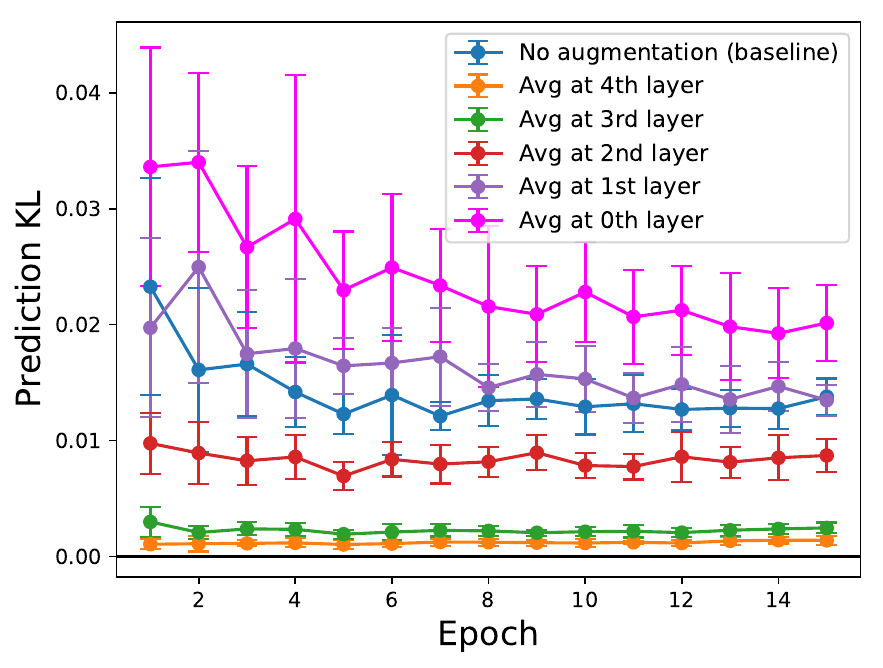}
    \caption{MNIST}
    \label{fig:accuracy_vs_computation}
  \end{subfigure} 
  \begin{subfigure}{.3\linewidth}
    \centering
        \vspace{.7em}
    \includegraphics[width=\linewidth]{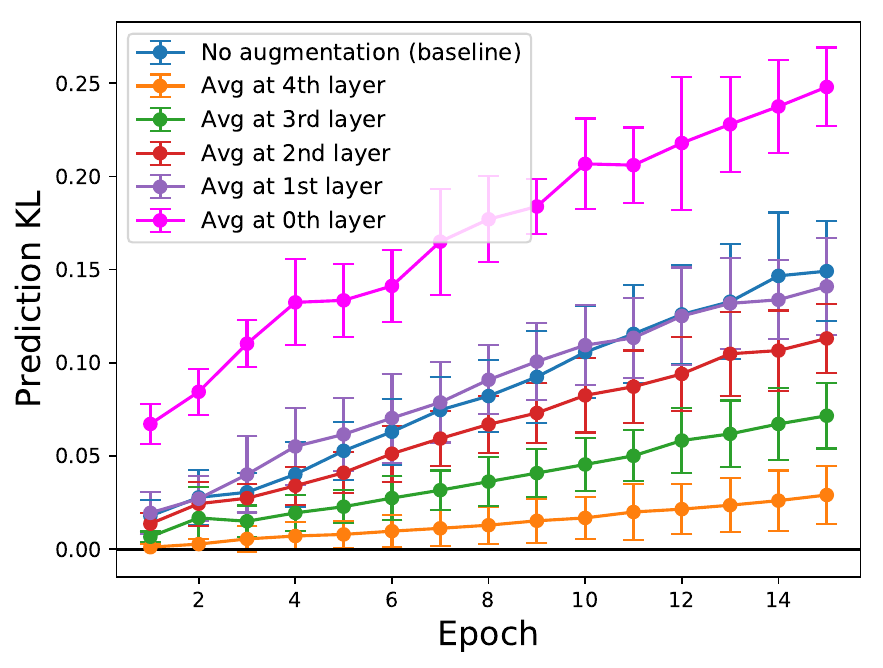}
    \caption{CIFAR-10}
    \label{fig:cifar_kl}
  \end{subfigure}
  \caption{Difference in generalization between approximate and true objectives for LeNet in terms of KL divergence in test predictions, for MNIST (a) and CIFAR-10 (b) datasets.
  }
  \label{fig:layers_appendix}
\end{figure}

\subsection{Layerwise Feature Invariance in a ResNet}
\label{subsec:layerwise_feature_invariance}

Finally, to provide additional motivation for the deep learning experiments in Section~\ref{sec:deep_learning}, where our theory breaks down due to non-convexity, we explore invariance with respect to deep neural networks. For each layer $l$ of a deep neural network, we examine the average difference in feature values when data points $x$ are transformed according to a certain augmentation distribution $T$, using a model trained with data augmentation $T'$ which has feature layers $\phi^{T'}_l$:
\begin{align}
  \triangle_{l, T, T'}
  &=
  \sum_{i=1}^n \Exv[z \sim T(x_i)]{
    \frac{1}{|\phi_l^{T'}(x_i)|} || \phi_l^{T'}(x_i) - \phi_l^{T'}(z) ||^2
  }
  \label{eqn:feat_invariance_metric}
\end{align}
Specifically, in Figure~\ref{fig:layerwise_feat_invariance}, we examine the ratio of this measure of invariance for a model trained with data augmentation using $T$, and trained without any data augmentation, $\triangle_{l, T, T} / \triangle_{l, T, \emptyset}$, to see if and how training with a specific augmentation makes the layers of the network more invariant to it.
We use a standard ResNet as in~\citet{he2016identity} with three blocks of nine residual units (separated by vertical dashed lines), an initial convolution layer, and a final global average pooling layer, implemented in TensorFlow~\footnote{\small{\url{https://github.com/tensorflow/models/tree/master/official/resnet}}}, trained on CIFAR-10 and averaged over ten trials.
We see that training with an augmentation indeed makes the feature layers of the network more invariant to this augmentation, with the steepest change in the earlier layers (first residual block), and again in the final layer when features are pooled and averaged.

\begin{figure*}[h!]
  \centering
     \begin{subfigure}{0.31\linewidth}
    \centering
    \includegraphics[width=\textwidth]{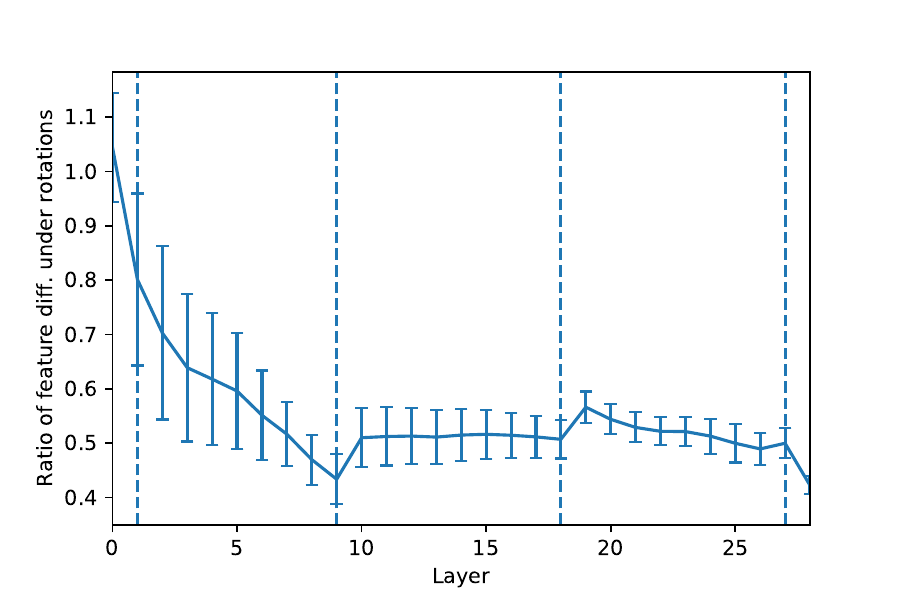}
    \caption{Rotations}
    \label{fig:rotation_ResNet}
  \end{subfigure}\hfill  \begin{subfigure}{0.31\linewidth}
    \centering
    \includegraphics[width=\textwidth]{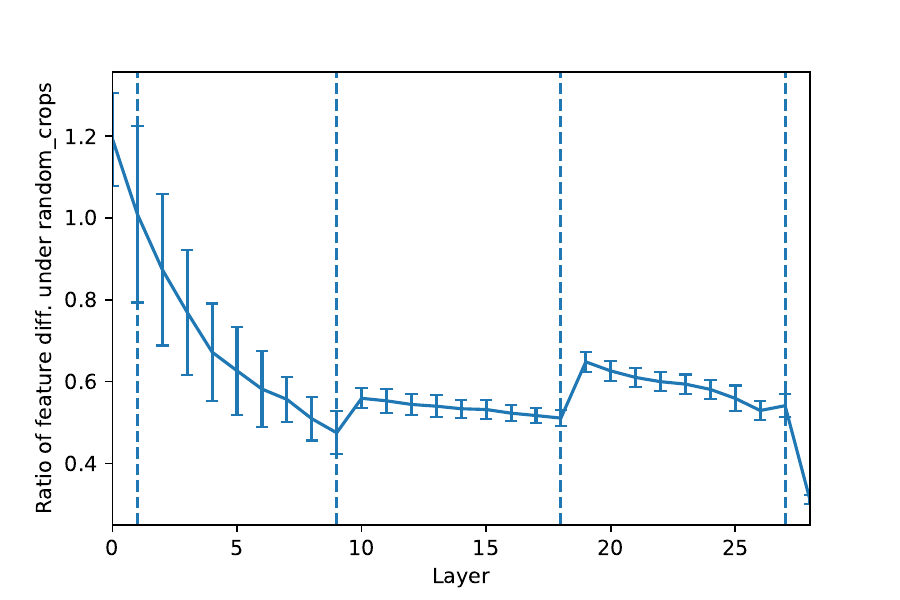}
    \caption{Random crops}
    \label{fig:random_crop_ResNet}
  \end{subfigure}\hfill    \begin{subfigure}{0.31\linewidth}
    \centering
    \includegraphics[width=\textwidth]{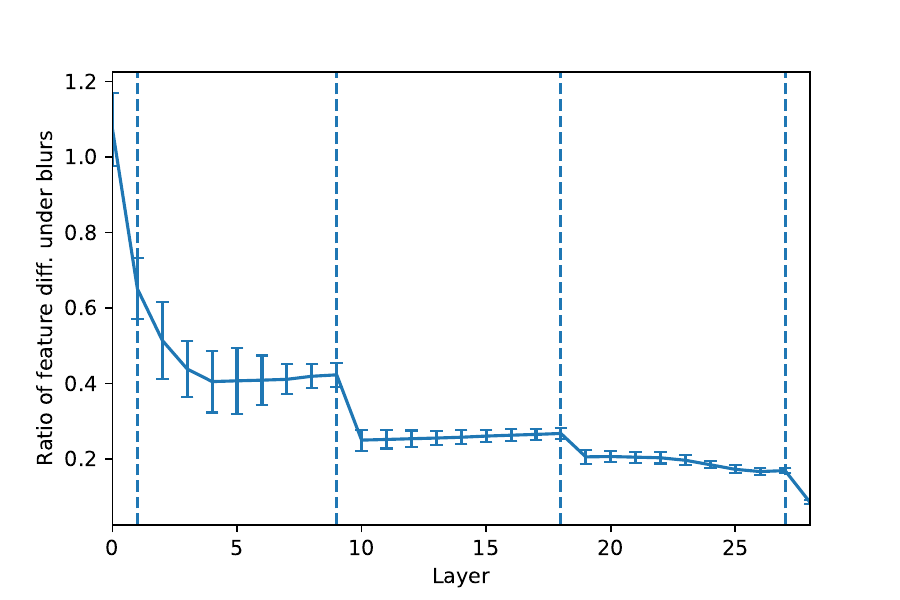}
    \caption{Blurs}
    \label{fig:blurs_ResNet}
  \end{subfigure}\hfill  \caption{The ratio of average difference in features under an augmentation (\ref{eqn:feat_invariance_metric}) when the network is trained with that augmentation, to when it is trained without any data augmentation, using a ResNet trained on CIFAR-10, averaged over ten trials. We see that in all but the first one or two layers, the network trained with the augmentation is indeed more invariant to it, with the steepest increase in invariance in the first block of layers and in the last global average pooling layer. }
  \label{fig:layerwise_feat_invariance}
\end{figure*}

\end{document}